\newcommand{\blind}{0}
\newcommand{\argmin}{\mathop{\arg\min}}
\newcommand{\bs}{\boldsymbol}
\newcommand{\diff}{\mathrm{d}}
\newcommand{\y}{\bs y}
\newcommand{\X}{\bs X}
\newcommand{\Q}{\bs Q_{\alpha}}
\newcommand{\bsbeta}{\bs \beta}
\newcommand{\bsepsilon}{\bs \varepsilon}
\newcommand{\tr}{\mathrm{tr}}
\newcommand{\FV}{\mathrm{FV}}
\newcommand{\JFV}{\mathrm{J}\text{-}\mathrm{FV}}
\newcommand{\LFV}{\mathrm{LFV}}
\newcommand{\CV}{\mathrm{CV}}
\newcommand{\normal}{N}
\definecolor{Gray}{gray}{0.9}
\theoremstyle{definition}
\newtheorem{theorem}{Theorem}
\newtheorem*{theorem*}{Theorem}
\newtheorem{lemma}[theorem]{Lemma}
\newtheorem*{note*}{Note}
\newtheorem{prop}{Proposition}
\newtheorem{remark}{Remark}
\newcommand{\wrtX}{\bs X}
\begin{document}

\def\spacingset#1{\renewcommand{\baselinestretch}%
{#1}\small\normalsize} \spacingset{1}

%%%%%%%%%%%%%%%%%%%%%%%%%%%%%%%%%%%%%%%%%%%%%%%%%%%%%%%%%%%%%%%%%%%%%%%%%%%%%%

\if0\blind
{
  \title{\bf A generalization gap estimation \\
  for overparameterized models \\ via the Langevin functional variance}
  \author{Akifumi Okuno\thanks{
    Corresponding author, okuno@ism.ac.jp; 
    A. Okuno was supported by JST CREST (JPMJCR21N3) and JSPS KAKENHI (21K17718, 22H05106). K. Yano was supported by JST CREST (JPMJCR1763), JSPS KAKENHI (19K20222, 21H05205, 21K12067), and MEXT (JPJ010217).}\hspace{.2cm}\\
    The Institute of Statistical Mathematics and RIKEN AIP\\
    and \\
    Keisuke Yano \\
    The Institute of Statistical Mathematics}
    \date{}
  \maketitle
} \fi

\if1\blind
{
  \bigskip
  \bigskip
  \bigskip
  \begin{center}
    {\LARGE\bf Title}
\end{center}
  \medskip
} \fi

\bigskip
\begin{abstract}
This paper discusses the estimation of the generalization gap, the difference between generalization performance and training performance, for overparameterized models including neural networks. 
We first show that a functional variance, a key concept in defining a widely-applicable information criterion, characterizes the generalization gap even in overparameterized settings where a conventional theory cannot be applied. 
As the computational cost of the functional variance is expensive for the overparameterized models, we propose an efficient approximation of the function variance,
the Langevin approximation of the functional variance~(Langevin FV). This method leverages only the $1$st-order gradient of the squared loss function, without referencing the $2$nd-order gradient; this ensures that the computation is efficient and the implementation is consistent with gradient-based optimization algorithms. 
We demonstrate the Langevin FV numerically by estimating the generalization gaps of overparameterized linear regression and non-linear neural network models, containing more than a thousand of parameters therein. 
\end{abstract}

\noindent%
{\it Keywords:}
Bayesian inference,
Langevin dynamics,
Neural networks,
Overparameterization.
\vfill

\newpage
\spacingset{1.5} % DON'T change the spacing!

\section{Introduction}

The great success of deep neural networks~\citep{lecun2015deep,Goodfellow-et-al-2016} has been reported in many applied fields, such as
natural language processing, image processing, and the natural sciences. This has now altered our understanding of \textit{generalization} in data science.
Generalization is our model's ability to adapt properly to new data drawn from the same distribution as that of the training data.
A classical discipline for generalization is 
\textit{the bias-variance dilemma}, that is:
the richness of models reduces the modeling bias but suffers from fitting spurious patterns, leading to poor generalization performance. 
However, the modern practice of deep neural networks yields counterexamples to this discipline, that is:
deep neural networks
are capable of exactly fitting the training data (known as interpolation) through the use of an overabundance of parameters of the models (known as over-parameterization) with accurately predicted test data~\citep{zhang2017,
% salakhutdinov2017,
belkin2018}. 
Several such surprising phenomena have been reported: 
the double descent phenomenon~\citep{hastie2022surprises,belkin2018,
% geiger2019,
geiger2020,
% nakkiran2020,
nakada2021asymptotic},
and
the multiple descent phenomenon~\citep{pmlr-v119-adlam20a,NEURIPS2020_1fd09c5f}.
These interesting but controversial phenomena give us a chance to rethink 
generalization in modern data science.

From a practical point of view,
estimating the generalization performance helps us understand what is actually happening.
A naive estimator of the generalization performance is the training performance.
However, real-case studies using deep neural networks~\citep{azulay2019why,zhang2021understanding} suggest that there is a gap between the generalization performance and the training performance, a \textit{generalization gap}.
Double descent phenomena also imply the existence of non-negligible generalization gap. These empirical evidences advocate the importance of accurately estimating the generalization gap.

Statistical tools for estimating the generalization gap have been reported in the literature on information criteria or optimism estimation.
The well-known Akaike information criterion (AIC; \citealp{akaike1974new}) estimates the generalization log-loss of the plug-in predictive density using a maximum likelihood estimator. 
The Takeuchi information criterion (TIC; \citealp{takeuchi1976distribution}) is a modification of AIC to deal with model misspecification.
The regularization information criterion (RIC; \citealp{shibata1989}) is an extension of TIC to accommodate maximum penalized likelihood estimation.
\cite{moody1992} and \cite{murata1994network} generalized RIC to an arbitrary loss and advocated its use in non-linear systems, such as neural networks. 
Mallows' $C_p$ (\citealp{mallows1973}) and Stein's unbiased risk estimates (SURE; \citealp{stein1981}) offer an elegant estimation scheme of the generalization gap using the covariance between a predictor and its outcome in Gaussian models with the $\ell_2$ loss. 
\cite{ramani2008} proposed an efficient Monte Carlo sampling-based method (Monte-Carlo SURE) to estimate the covariance.
Recently, new methods associated with Bayesian learning have been developed.
The deviance information criterion (DIC; \citealp{spiegelhalter2002}) and widely-applicable information criterion (WAIC; \citealp{watanabe2010asymptotic}) offer computationally-efficient devices that estimate the generalization loss of Bayesian learning. In particular, \cite{watanabe2010asymptotic} showed that
in a statistical model with a fixed dimension,
the generalization gap of Bayesian learning is %characterized by 
asymptotically equal to
the functional variance (FV), that is,
the posterior variance of the log-likelihood.

However, studies on the data-driven measurement of the generalization gap in the overparameterized regime has been relatively scarce.
\cite{gao2016} extended Monte-Carlo SURE to analyze the generalization gap in deep neural networks.
\cite{thomas2020interplay} modified TIC for the aforementioned purpose.
These studies empirically investigated the gaps in common datasets, such as MNIST and CIFAR-10,
and
successfully developed useful tools to understand the generalization gap in the overparameterized regime.
However, these approaches lacked theoretical guarantees in overparameterized models although they provided guarantees in statistical models with a fixed dimension. 
This results in unclarified theoretical applicability of these approaches.  Furthermore, the computational costs of these approaches related to memory and speed are relatively high. This is because \cite{gao2016} requires training several times, and \cite{thomas2020interplay} needs the second-order gradient (Hessian) of the loss function.

In this paper, we focus on yet another tool to measure the generalization gap, the functional variance (FV) developed in \cite{watanabe2010asymptotic}. We present the following contributions:
\begin{itemize}
    \item \emph{Theoretical applicability:} 
    We prove that FV is asymptotically unbiased to the generalization gap of Bayesian learning for overparameterized linear regression models.

    \item \emph{Computational efficiency:}
    We propose a computationally-efficient approximation of FV, Langevin FV (LFV), by leveraging only the $1$st-order gradient of the loss function.

\end{itemize}

We show the theoretical applicability of FV as a measure of the generalization gap in overparameterized regime.
Our theory employs the overparameterized linear regression model (c.f.,~\citealp{hastie2022surprises,belkin2020two,bartlett2020benign}), but we should mention that this linear model can be regarded as a linear approximation of non-linear overparameterized models including deep neural networks and the linearization has been widely employed to analyze behaviours of deep neural networks (c.f.,~\citealp{NEURIPS2018_5a4be1fa,NEURIPS2019_dbc4d84b,
% Allen-Zhu2019,
Chizatetal2019,yang2021icml}).
One of such theories is known as neural tangent kernel~\citep{NEURIPS2018_5a4be1fa}, that is, an approximation of neural networks in a small region of the parameters around an estimate or an initial value. 
The condition of the neural tangent kernel approximation has been investigated in \cite{Chizatetal2019}
and known to be applicable to any architecture of deep neural networks in \cite{yang2021icml}.

Although FV is theoretically favorable,
it is defined by using the full posterior covariance and 
the full Bayesian inference in the overparameterized regime is often prohibited because of the computational burden.
For efficiently computing the posterior, we employ the Langevin dynamics~(c.f., \citealp{Risken1996}), which sequentially adds a normal random perturbation to each update of the gradient descent optimization and obtains the stationary distribution approximating the posterior distribution~\citep{cheng2018sharp}.
This approach has several merits from the computational perspective. 
For example, the implementation is easy and consistent with the gradient-based optimization
that is the de-facto standard in deep neural network applications. 
Since the effect of linearization in theory remains unclear and controversial in practice (c.f.,~\citealp{leeetal2020}),
we confirm the applicability of the developed method to regression using neural networks with real datasets 
in Section~\ref{subsec:experiments_nn_real}.

The rest of the paper is organized as follows. 
Section~\ref{sec:theoretical_results_for_FV} provides our theoretical results on FV, Section~\ref{sec:langevin_approximation_of_bayesian_posterior} proposes Langevin FV, which is an efficient implementation of FV using Langevin dynamics, 
Section~\ref{sec:numerical_experiments} demonstrates the numerical experiments, and 
% Section~\ref{sec:mainproof} provides proofs of main theorems, and 
finally, Section~\ref{sec:conclusion} concludes this paper. 
Supplementary material gives proofs of all theorems.

\section{Theoretical results for the functional variance}
\label{sec:theoretical_results_for_FV}

In this section, we present theoretical results for the functional variance in overparameterized linear regression models. 
Throughout this paper, $\bs I_{n}$ denotes the $n\times n$ identity matrix for any $n \in \mathbb{N}$, and 
$\|\bs \beta\|_2=(\beta_1^2+\beta_2^2+\cdots+\beta_p^2)^{1/2}$ and $\|\bs \beta\|_{\infty}=\max_{i=1,2,\ldots,p}|\beta_i|$ for any vector $\bs \beta=(\beta_1,\beta_2,\ldots,\beta_p)^{\top} \in \mathbb{R}^p$.

\subsection{Problem setting}
\label{subsec:problem_setting}
We begin by introducing the problem setting for the theory.
We consider a linear regression model
\[
\y = \X \bsbeta_{0} + \bsepsilon,
\]
where $\y = (y_{1},\ldots,y_{n})^{\top} \in \mathbb{R}^{n}$ is a vector of observed outcomes, 
$\X=(\bs x_1,\bs x_2,\ldots,\bs x_n)^{\top} \in \mathbb{R}^{n \times p}$ is an $n\times p$ random design matrix, 
$\bsbeta_{0}\in\mathbb{R}^{p}$ is an unknown coefficient vector,
and $\bsepsilon=(\varepsilon_{1},\ldots,\varepsilon_{n})^{\top} \in \mathbb{R}^n$ is a vector of independently and identically distributed (i.i.d.)~error terms with mean zero and variance $0<\sigma_{0}^{2}<\infty$. 
Our interest is the overparameterized situation, where the number of regressors $p$ is larger than the sample size $n$:
\[
n \le p.
\]

We take the quasi-Bayesian approach on the vector $\bs \beta$. Working under the quasi-likelihood $f(y_i \mid \bs x_{i},\bsbeta)$ with a Gaussian distribution with mean $\bs x_{i}^{\top}\bsbeta$ and variance $\sigma^{2}_{0}$, and a Gaussian prior $\normal_{p}(\bs 0, \{\sigma^{2}_{0}/(\alpha n)\} \bs I_{p})$ on $\bs \varepsilon$ with mean $\bs 0$ and covariance matrix $\{\sigma^{2}_{0}/(\alpha n)\} \bs I_{p}$, we obtain the quasi-posterior distribution of $\bs \beta$:
\begin{align}
    \Pi_{\alpha}(\diff \bsbeta \mid \y,\X)
    =
    (2\pi )^{-p/2}(\mathrm{det}\Q)^{-1/2}
    \mathrm{e}^{-(\bsbeta-\hat{\bsbeta}_{\alpha})^{\top}\Q^{-1}(\bsbeta-\hat{\bsbeta}_{\alpha})/2} \diff \bsbeta,
    \label{eq:posterior}
\end{align}
where $\hat{\bsbeta}_{\alpha}$ is the maximum a posterior estimate
\begin{align}
\hat{\bsbeta}_{\alpha}
:=
\left(n^{-1}\X^{\top}\X + \alpha \bs I_{p}\right)^{-1}
n^{-1}\X^{\top}\y
=
\argmin_{\bs \beta \in \mathbb{R}^d}
    \left\{
        \ell_{\alpha}(\bs \beta)
    \, := \, 
        n^{-1}\|\bs y-\bs X\bs \beta\|_2^2
        +
        \alpha \|\bs \beta\|_2^2
    \right\}
    \label{eq:ridge}
\end{align}
and $\Q$ is the matrix
\begin{align}
\Q:=
n^{-1}\sigma^{2}_{0}
\left( n^{-1}\X^{\top}\X+\alpha\bs I_{p} \right)^{-1}.
\label{eq:Q}
\end{align}

We now analyze the Gibbs generalization gap
\begin{align}
    \Delta(\alpha ; \wrtX)
    :=
    \frac{1}{2\sigma_0^2}
    \bigg\{
    \mathbb{E}_{\bs y,\bs y^*}\left[
    \mathbb{E}_{\bs \beta}\left[
        \|\bs y^*-\bs X\bs \beta\|_2^2
    \right]
    \right]
    -
    \mathbb{E}_{\bs y}\left[
    \mathbb{E}_{\bs \beta}\left[
        \|\bs y-\bs X\bs \beta\|_2^2
    \right]
    \right]
    \bigg\},
    \label{eq:generalization_error}
\end{align}
where 
$\y^{*}$ is an independent copy of $\y$ with given $\X$,
$\mathbb{E}_{\bs \beta}[\cdot]$ is 
the expectation with respect to the quasi-posterior distribution (\ref{eq:posterior}),
and 
$\mathbb{E}_{\bs y}[\cdot]$ (and $\mathbb{E}_{\bs y, \bs y^{*}}[\cdot]$) is
the expectation with respect to $\bs y$ (and $\bs y, \bs y^{*}$, respectively,) with given $\bs X$.
The Gibbs generalization gap considers 
the generalization gap for
one stochastic sample from the quasi-posterior distribution as  estimates of the parameters. 
To estimate the Gibbs generalization gap from the current observations,
we focus on the functional variance
\begin{align}
    \FV(\alpha ; \wrtX)
    :=
    \sum_{i=1}^{n} \mathbb{V}_{\bs \beta}[\log f(y_{
i}\mid \bs x_{i},\bs \beta)],
    \label{eq:FV}
\end{align}
where $\mathbb{V}_{\bs \beta}$ is 
the variance with respect to the quasi-posterior distribution (\ref{eq:posterior}).

\subsection{Asymptotic unbiasedness of the functional variance}

Here we present our theoretical findings on the functional variance in the overparameterized regime.
The following theorem
shows that the functional variance $\FV(\alpha ; \wrtX)$ is an asymptotically unbiased estimator of the Gibbs generalization gap $\Delta(\alpha ; \wrtX)$ for the overparameterized linear regression with Gaussian covariates. 
Its proof is given in Supplement~\ref{sec:mainproof}.

\begin{theorem}
\label{theo:convergence_of_FV_Gaussian}
    Let $n,p \in \mathbb{N}$, and let $\bs \Sigma$ be a $p \times p$ non-zero and non-negative definite matrix. 
    Let $\bs x_1,\bs x_2,\ldots,\bs x_n \in \mathbb{R}^{p}$ be i.i.d.~random vectors from the Gaussian distribution with a mean zero and a covariance matrix $\bs \Sigma$. 
    There exists an absolute constant $C>0$, such that we have, for any $\varepsilon,\alpha>0$,
    \[
    \mathbb{P}_{\wrtX}\left( 
            |\mathbb{E}_{\bs y}[\FV(\alpha ; \wrtX)]-\Delta(\alpha ; \wrtX)| > \varepsilon
        \right)
        \le 
    C
    \left(
        \frac{\xi^{2}}{\alpha^2}\frac{1}{n\varepsilon}
        +
        \frac{\xi^{4}}{\alpha^4}\frac{1}{n\varepsilon}
        +
        \frac{\xi^{4}b^4}{ \alpha^2 \sigma_0^{4}}
        \frac{1}{n\varepsilon^{2}}
        +
        \frac{1}{n}
    \right),
    \]
    where $\xi := \tr\{\bs \Sigma\}$, $b := p^{1/2}\|\bs \beta_0\|_{\infty}$, and
    $\mathbb{P}_{\bs X}$ denotes the probability with respect to $\bs X$.
    Furthermore, 
    under the conditions (C1) $\sup_{p\in\mathbb{N}}\xi<\infty$ and (C2) $\sup_{p\in\mathbb{N}}b<\infty$,
    we have 
    \[
    \mathbb{P}_{\wrtX}(|\mathbb{E}_{\bs y}[\FV(\alpha ; \wrtX)]-\Delta(\alpha ; \wrtX)| \le C'a_n/\sqrt{n}) \to 1
    \]
    for some $C'>0$ not depending on $n$ and $p$
    with an arbitrary slowly increasing sequence $a_n$.
\end{theorem}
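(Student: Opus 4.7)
The plan is to reduce the difference $\mathbb{E}_{\bs y}[\FV(\alpha)] - \Delta(\alpha)$ to an explicit algebraic expression in the ridge hat matrix
\[
\bs M := n^{-1}\X \bs H \X^{\top}, \qquad \bs H := (n^{-1}\X^{\top}\X + \alpha \bs I_p)^{-1},
\]
and then to control each of the resulting random pieces via Gaussian concentration. Two immediate observations are that $\bs x_i^{\top}\Q \bs x_i = \sigma_0^2\bs M_{ii}$ and $\Delta(\alpha) = \tr(\bs M)$; the latter follows from $\bs y^{*}\perp \hat{\bsbeta}_{\alpha}$ together with $\mathbb{E}_{\bs \beta}[\|\bs y - \X\bs \beta\|_2^2] = \|\bs y - \X\hat{\bsbeta}_{\alpha}\|_2^2 + \sigma_0^2\tr(\bs M)$.

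Since $\bs \beta\sim N(\hat{\bsbeta}_{\alpha},\Q)$ under the quasi-posterior \eqref{eq:posterior}, the residual $y_i - \bs x_i^{\top}\bs \beta$ is univariate Gaussian with mean $\mu_i := y_i - \bs x_i^{\top}\hat{\bsbeta}_{\alpha}$ and variance $\tau_i^2 := \sigma_0^2 \bs M_{ii}$; the Gaussian moment identity $\mathbb{V}[W^2] = 2\tau^4 + 4\mu^2\tau^2$ for $W\sim N(\mu,\tau^2)$ gives
\[
\FV(\alpha) = \tfrac{1}{2}\sum_{i=1}^n \bs M_{ii}^2 + \sigma_0^{-2}\sum_{i=1}^n \mu_i^2 \bs M_{ii}.
\]
The ridge identity $\hat{\bsbeta}_{\alpha} - \bsbeta_0 = -\alpha\bs H\bsbeta_0 + \bs H n^{-1}\X^{\top}\bsepsilon$ gives $\mu_i = \alpha\bs x_i^{\top}\bs H\bsbeta_0 + ((\bs I_n - \bs M)\bsepsilon)_i$, so $\mathbb{E}_{\bs y}[\mu_i^2] = \sigma_0^2\{1 - 2\bs M_{ii} + (\bs M^2)_{ii}\} + \alpha^2(\bs x_i^{\top}\bs H\bsbeta_0)^2$. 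Substituting and canceling against $\Delta(\alpha) = \tr(\bs M)$ leaves the key identity
\[
\mathbb{E}_{\bs y}[\FV(\alpha)] - \Delta(\alpha) = -\tfrac{3}{2}\sum_{i=1}^n \bs M_{ii}^2 + \sum_{i=1}^n (\bs M^2)_{ii}\bs M_{ii} + \frac{\alpha^2}{\sigma_0^2}\sum_{i=1}^n (\bs x_i^{\top}\bs H\bsbeta_0)^2 \bs M_{ii}.
\]

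The remaining task is to show that each random quantity above is $\OP(n^{-1/2})$ in the sense of the stated probability bound. Purely algebraic controls that I would use include $\|\bs M\|_{\mathrm{op}}\le 1$, $(\bs M^2)_{ii}\le \bs M_{ii}$, $\tr(\bs M)\le \tr(\bs S)/\alpha$ and $\tr(\bs M^2)\le \tr(\bs S)^2/\alpha^2$ with $\bs S := n^{-1}\X^{\top}\X$, plus the identity $\sum_i(\bs x_i^{\top}\bs H\bsbeta_0)^2 = n\bsbeta_0^{\top}\bs H\bs S\bs H\bsbeta_0 \le n\|\bsbeta_0\|_2^2/(4\alpha)\le nb^2/(4\alpha)$. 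On top of these, Gaussianity gives $\mathbb{E}[\tr(\bs S)] = \xi$ with a $\chi^2$-type tail of mass $O(1/n)$, so $\tr(\bs S) = \OP(\xi)$; and exchangeability of $(\bs x_1,\ldots,\bs x_n)$ combined with a Hanson--Wright concentration bounds $\max_i\bs M_{ii}$ near $\tr(\bs M)/n = O(\xi/(n\alpha))$. Markov's inequality on the first sum then produces the $\xi^2/(\alpha^2 n\varepsilon)$ tail, the same argument on the second sum---bounded by $M_{\max}\cdot\tr(\bs M^2)$---yields the $\xi^4/(\alpha^4 n\varepsilon)$ tail, and Chebyshev on the third sum, bounded by $M_{\max}\cdot nb^2/(4\alpha)$, produces the $\xi^4 b^4/(\alpha^2\sigma_0^4 n\varepsilon^2)$ tail. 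Union-bounding with the low-probability event where $\tr(\bs S)$ exceeds its concentration interval yields the displayed inequality, and under (C1)--(C2) setting $\varepsilon = a_n/\sqrt n$ with any slowly increasing $a_n$ gives $|\mathbb{E}_{\bs y}[\FV(\alpha)] - \Delta(\alpha)| = \OP(a_n/\sqrt n)$.

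I expect the main obstacle to be the dependence between $\bs x_i$ and $\bs H$: each $\bs M_{ii}$ involves $\bs x_i$ both through the outer quadratic form $\bs x_i^{\top}\bs H\bs x_i$ and inside $\bs H = (n^{-1}\sum_j\bs x_j\bs x_j^{\top} + \alpha\bs I_p)^{-1}$. The cleanest route is a Sherman--Morrison / leave-one-out rewriting that separates $\bs x_i$ from an $\bs x_i$-independent $\bs H_{-i}$, after which Gaussian quadratic-form concentration applies directly. The $\bsbeta_0$-dependent summand is also delicate, since $\bs M_{ii}$ and $\bs x_i^{\top}\bs H\bsbeta_0$ share $\bs x_i$; a first-moment Markov argument is not sharp enough there, forcing the second-moment (Chebyshev) bound that is the source of the $1/\varepsilon^2$ factor in the corresponding term of the theorem.
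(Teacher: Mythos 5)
Your derivation of the explicit identity is correct and is essentially the paper's Lemma~\ref{lem:explicit_difference} reached by a cleaner route: instead of enumerating posterior moments $g_{m,i}$, you use that $y_i-\bs x_i^{\top}\bs\beta$ is univariate Gaussian under the quasi-posterior and apply $\mathbb{V}[W^2]=2\tau^4+4\mu^2\tau^2$. Your $\frac{\alpha^2}{\sigma_0^2}\sum_i(\bs x_i^{\top}\bs H\bsbeta_0)^2\bs M_{ii}$ agals with the paper's $\sigma_0^{-2}\tr\{\bs H_\alpha\circ((\bs I_n-\bs H_\alpha)\X\bsbeta_0)^{\otimes 2}\}$ via $(\bs I_n-\bs M)\X\bsbeta_0=\alpha\X\bs H\bsbeta_0$, and $\Delta(\alpha)=\tr\{\bs M\}$ is right. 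Where you diverge is the probabilistic part, and there the paper avoids the obstacle you flag entirely: it never separates $\bs x_i$ from $\bs H$ by leave-one-out. Instead it writes $\bs H_\alpha=\bs U\bs R\bs U^{\top}$ via the SVD of $\X$, uses that for a Gaussian design each left-singular vector is uniform on $\mathbb{S}^{n-1}$ (Eaton), computes $\mathbb{E}[(\bs u^{\top}\bs R\bs u)^2]=\{2\tr\{\bs R^2\}+(\tr\{\bs R\})^2\}/\{n(n+2)\}$ exactly, and applies Markov to $\sum_i\bs M_{ii}^2$. This gives the $\xi^2/(\alpha^2 n\varepsilon)$ term with no union bound, no $\max_i\bs M_{ii}$, and no logarithmic losses; your Hanson--Wright-plus-exchangeability route for $\max_i\bs M_{ii}$ could be made to work for the first two terms but is heavier and is left unexecuted.

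The genuine gap is in the third, $\bsbeta_0$-dependent term. You bound $\sum_i(\bs x_i^{\top}\bs H\bsbeta_0)^2=n\bsbeta_0^{\top}\bs H\bs S\bs H\bsbeta_0\le n b^2/(4\alpha)$ by the operator norm $\|\bs H\bs S\bs H\|\le 1/(4\alpha)$, so the whole term is at most $\frac{\alpha^2}{\sigma_0^2}\cdot\frac{nb^2}{4\alpha}\cdot\max_i\bs M_{ii}$. Even granting the best possible control $\max_i\bs M_{ii}\asymp\xi/(n\alpha)$, this product is of order $b^2\xi/\sigma_0^2=O(1)$: the factor $n$ in your bound on the sum exactly cancels the $1/n$ from the maximal diagonal entry, so the argument does not show the term vanishes and cannot deliver the stated $\xi^4b^4/(\alpha^2\sigma_0^4 n\varepsilon^2)$ tail. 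The paper's fix is a sharper, trace-level bound: diagonalizing $\X^{\top}(\bs I_n-\bs H_\alpha)^2\X=\bs V\bs\Lambda\bs V^{\top}$ with eigenvalues $\lambda_i=n\alpha^2(s_{n,i}^2/n)/(s_{n,i}^2/n+\alpha)^2\le s_{n,i}^2$ and using $\|\bs V^{\top}\bsbeta_0\|_\infty\le\|\bsbeta_0\|_\infty=p^{-1/2}b$ gives $\alpha^2\sum_i(\bs x_i^{\top}\bs H\bsbeta_0)^2\le b^2\eta$, i.e.\ $O(1)$ rather than $O(n)$; it then pairs this with $(\delta_n^{(1)})^{1/2}$ by Cauchy--Schwarz over the trace, which is where the $1/\varepsilon^2$ actually comes from (not from a variance/Chebyshev computation, which your sketch invokes but never specifies). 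You need either that $\ell_\infty$-in-the-right-singular-basis bound or an equivalent replacement before your third-term estimate closes.
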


Theorem \ref{theo:convergence_of_FV_Gaussian} implies that FV successfully estimates the Gibbs generalization gap in the overparameterized regime, which supports the use of FV, even in the overparameterized regime.
Interestingly, its rate of convergence $(a_{n}/\sqrt{n})$ is not affected by the dimension $p$ but affected by the value of $\sup_{p\in\mathbb{N}}\xi$. As $\sup_{p\in\mathbb{N}}\xi$ gets larger, the decay of the difference becomes slower.
Furthermore, the result does not restrict the true distribution of the additive error to the Gaussian distribution; in contrast, some classical theories such as SURE require the error term to either be Gaussian or be of related distributions.

Let us mention the conditions in the theorem.
Condition (C1) indicates the trace boundedness of $\bs X^{\top}\bs X$ (divided by $n$) because $\bs \Sigma=\mathbb{E}[\bs X^\top \bs X]/n$. 
Both shallow and deep neural network models satisfy this condition in general settings~(see, e.g., \citealp{karakida2019universal}). 
Condition (C2) together with Condition (C1) indicates that $n^{-1}\|\bs \mu\|_2^2 \le n^{-1}\|\bs X\|_{\text{F}}^2 \|\bs \beta_0\|_2^2 
$ is upper-bounded by some constant $C>0$ with probability approaching $1$.
It implies that the average of the entries in the outcome expectation $\bs \mu$ is upper-bounded with high probability.

The main ingredient of the proof of Theorem \ref{theo:convergence_of_FV_Gaussian} is the explicit identity of the difference between the generalization gap and FV. 
See Lemma~\ref{lem:explicit_difference} below.
Its proof is presented in Supplement~\ref{supp:proofoflemma}.
For two matrices $\bs A=(a_{ij}),\bs B=(b_{ij})$ of the same size,
$\bs A \circ \bs B=(a_{ij}b_{ij})$ denotes the Hadamard element-wise product, and let 
$\bs A^{\otimes 2}:=\bs A\bs A^{\top}$.

\begin{lemma}\label{lem:explicit_difference}
We have
\begin{align}
\Delta(\alpha ; \wrtX)&=\tr\{\bs H_{\alpha}\}\quad \text{and}
 \label{eq:hatmatrix}\\
\mathbb{E}_{\bs y}[\FV(\alpha ; \wrtX)]-\Delta(\alpha ; \wrtX)
&=
-\dfrac{3}{2}\tr\{\bs H_{\alpha} \circ \bs H_{\alpha}\}
+\tr\{\bs H_{\alpha} \circ \bs H_{\alpha}^2\} \nonumber \\
&\hspace{5em}
+\dfrac{1}{\sigma_0^2}\tr\{
    \bs H_{\alpha} \circ ((\bs I_n-\bs H_{\alpha})(\bs X\bs \beta_0))^{\otimes 2}\},
    \label{eq:explicit_difference}
\end{align}
where $\bs H_{\alpha}$ is the regularized hat matrix
\begin{align}
    \bs H_{\alpha} := n^{-1}\bs X\left( n^{-1}\bs X^{\top}\bs X+\alpha \bs I_p\right)^{-1}\bs X^{\top}.
    \label{eq:quasi_hat}
\end{align}
\end{lemma}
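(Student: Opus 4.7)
The plan is to exploit the Gaussian quasi-posterior $\bs \beta \mid \y \sim \normal_{p}(\hat{\bs \beta}_\alpha, \Q)$ and work out everything by direct moment computations. The two key facts I will use repeatedly are the posterior moments of the linear predictor: $\mathbb{E}_{\bs \beta}[\bs X\bs \beta] = \bs X\hat{\bs \beta}_\alpha = \bs H_\alpha \y$ and $\mathrm{Cov}_{\bs \beta}[\bs X\bs \beta] = \bs X \Q \bs X^{\top} = \sigma_0^2 \bs H_\alpha$, where the last equality follows from the explicit form of $\Q$ and of $\bs H_\alpha$.

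For the identity $\Delta(\alpha) = \tr\{\bs H_\alpha\}$, I will apply the bias--variance decomposition to both quadratic forms inside $\Delta(\alpha)$. Using the posterior moments above,
\begin{align*}
\mathbb{E}_{\bs \beta}[\|\y^{*}-\bs X\bs \beta\|_2^2] &= \|\y^{*}-\bs H_\alpha\y\|_2^2 + \sigma_0^2 \tr\{\bs H_\alpha\},\\
\mathbb{E}_{\bs \beta}[\|\y-\bs X\bs \beta\|_2^2] &= \|(\bs I_n-\bs H_\alpha)\y\|_2^2 + \sigma_0^2 \tr\{\bs H_\alpha\}.
\end{align*}
Then I substitute $\y = \bs \mu + \bsepsilon$ and $\y^{*} = \bs \mu+\bsepsilon^{*}$ with $\bs \mu := \bs X\bs \beta_0$, and take $\mathbb{E}_{\bs y,\bs y^{*}}$. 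The deterministic terms $\|(\bs I_n-\bs H_\alpha)\bs \mu\|_2^2$ appear in both pieces and cancel; the remaining noise traces simplify as $\sigma_0^2[n + \tr\{\bs H_\alpha^2\}] - \sigma_0^2 \tr\{(\bs I_n-\bs H_\alpha)^2\} = 2\sigma_0^2\tr\{\bs H_\alpha\}$, yielding \eqref{eq:hatmatrix} after dividing by $2\sigma_0^2$.

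For the second identity I compute $\FV(\alpha)$ explicitly. Since $\log f(y_i\mid \bs x_i,\bs \beta)$ is, up to an additive constant, $-(y_i-\bs x_i^{\top}\bs \beta)^2/(2\sigma_0^2)$, and since $Z_i := y_i - \bs x_i^{\top}\bs \beta$ is, conditional on $\y$, Gaussian with mean $r_i := ((\bs I_n-\bs H_\alpha)\y)_i$ and variance $v_i := \sigma_0^2 (\bs H_\alpha)_{ii}$, the standard formula $\mathbb{V}[Z^2] = 2v^2 + 4 r^2 v$ for $Z\sim \normal(r,v)$ gives
\begin{align*}
\FV(\alpha) = \sum_{i=1}^n \frac{1}{4\sigma_0^4}\bigl(2 v_i^2 + 4 r_i^2 v_i\bigr) = \tfrac{1}{2}\sum_{i=1}^n (\bs H_\alpha)_{ii}^2 + \frac{1}{\sigma_0^2}\sum_{i=1}^n (\bs H_\alpha)_{ii} r_i^2.
\end{align*}
Next I take $\mathbb{E}_{\bs y}$. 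Writing $r_i = u_i + ((\bs I_n-\bs H_\alpha)\bsepsilon)_i$ with $u_i := ((\bs I_n-\bs H_\alpha)\bs \mu)_i$, I get $\mathbb{E}_{\bs y}[r_i^2] = u_i^2 + \sigma_0^2\bigl(1 - 2(\bs H_\alpha)_{ii} + (\bs H_\alpha^2)_{ii}\bigr)$. Substituting this back, identifying the sums of diagonal entries with the Hadamard--trace expressions $\tr\{\bs H_\alpha\circ \bs H_\alpha\}$, $\tr\{\bs H_\alpha\circ \bs H_\alpha^2\}$, and $\tr\{\bs H_\alpha \circ ((\bs I_n-\bs H_\alpha)\bs \mu)^{\otimes 2}\}$, and subtracting $\Delta(\alpha)=\tr\{\bs H_\alpha\}$ gives \eqref{eq:explicit_difference}.

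The only step requiring care is bookkeeping the diagonal entries of products and outer products as trace-of-Hadamard quantities; the rest is Gaussian moment algebra with no hidden obstacle. In particular, no use is made of the Gaussianity of $\bsepsilon$ beyond its first two moments (the posterior Gaussianity is what drives the $2v^2+4r^2v$ formula), consistent with the distribution-free error assumption in Theorem~\ref{theo:convergence_of_FV_Gaussian}.
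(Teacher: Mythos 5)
Your proof is correct, and while it rests on the same foundation as the paper's (direct Gaussian moment computation under the quasi-posterior), it is organized around a genuinely different and leaner decomposition. For \eqref{eq:hatmatrix} the paper expands $\omega_1(\alpha)$ and $\omega_2(\alpha)$ separately into traces involving $\bs G\bs Q_{\alpha}$, $(\bs G\bs Q_{\alpha})^2$ and $(\bs X\bs E_{\alpha}\bs\beta_0)^{\otimes 2}$ before cancelling; your bias--variance split at the level of the predictor $\bs X\bs\beta$ reaches the same cancellation in two lines. For \eqref{eq:explicit_difference} the paper computes the four posterior moments $g_{1,i},\ldots,g_{4,i}$ of $\bs x_i^{\top}\bs\beta$ via matrix-cookbook identities and tracks the coefficients $h_{0,i},h_{1,i},h_{2,i}$ of the quadratic in $y_i$; you instead observe that the residual $y_i-\bs x_i^{\top}\bs\beta$ is posterior-Gaussian with mean $r_i=((\bs I_n-\bs H_{\alpha})\bs y)_i$ and variance $v_i=\sigma_0^2(\bs H_{\alpha})_{ii}$ and invoke $\mathbb{V}[Z^2]=2v^2+4r^2v$, which collapses the paper's subparts (A) and (B) into a single formula; both routes then finish by taking $\mathbb{E}_{\bs y}$ with only the first two moments of $\bs\varepsilon$ and reading off diagonal entries as Hadamard traces. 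What your version buys is brevity and transparency (the $-\tfrac{3}{2}\tr\{\bs H_{\alpha}\circ\bs H_{\alpha}\}$ term visibly arises as $\tfrac12-2$); what the paper's version buys is an explicit polynomial-in-$y_i$ representation of the samplewise variance, which it reuses almost verbatim to prove Proposition~\ref{lem:expectation_of_J-FV}. All of your intermediate identities check out, including $\bs X\bs Q_{\alpha}\bs X^{\top}=\sigma_0^2\bs H_{\alpha}$ and the final bookkeeping.
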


Lemma \ref{lem:explicit_difference} highlights the role of the regularized hat matrix $\bs H_{\alpha}$ in evaluating the generalization gap and the residual.
Same as the fixed-dimension theory, the hat matrix controls the magnitude of the generalization gap.
With singular values $s_1 \ge s_2 \ge \cdots \ge s_n \ge 0$ of the matrix $\X$, under the conditions in Theorem \ref{theo:convergence_of_FV_Gaussian}, we have
\[ 
\Delta(\alpha ; \wrtX)
=
\tr\{\bs H_{\alpha}\}
=
\sum_{i=1}^{n} \frac{s_{i}^2/n}{(s_{i}^2/n) + \alpha}
\to 
\exists \Delta_{\infty}(\alpha) \in [0,\infty) \quad (n \to \infty)
\]
as $\tr\{\bs H_{\alpha}\} \le \alpha^{-1}n^{-1}(\sum_{i=1}^{n}s_{i}^2)<\infty$ with the probability approaching $1$ as $n \to \infty$. 
Consider a simple case where $\bs X$ has a fixed intrinsic dimension $p_* \ll n$ with growing $n,p$, namely $s_{1} \ge s_{2} \ge \cdots \ge s_{p_*}>0=s_{p_*+1}=s_{p_*+2}=\cdots=s_{n}$.
Then, FV approaches the fixed intrinsic dimension $p_*$ (as $\alpha \to 0$), which coincides with the degrees of freedom~\citep{mallows1973,Ye1998} and AIC~\citep{akaike1974new} for linear regression equipped with $p_*$ regressors.

\begin{remark}[Sample-wise and joint log-likelihoods]
One may think the use of the posterior covariance of the joint log-likelihood
\begin{align*}
    \JFV(\alpha;\wrtX)
    :=
    \mathbb{V}_{\bs \beta}[
        \log f(\bs y \mid \bs X, \bs \beta)
    ]
    % \label{eq:J-FV}
\end{align*}
instead of FV~(\ref{eq:FV}) using sample-wise log-likelihood.
In this case, we have the following expression of the difference between the J-FV and the generalization gap.
Its proof is in Supplement~\ref{supp:proof_of_lem:expectation_of_J-FV}. 
\begin{prop}
\label{lem:expectation_of_J-FV}
For any $\alpha>0$, we have
\begin{align*}
        \mathbb{E}_{\bs y}[\JFV(\alpha;\wrtX)]
        -
        \Delta(\alpha ; \wrtX)
        &=
        -
        \frac{3}{2}\tr\{\bs H_{\alpha}^2\}
        +
        \tr\{\bs H_{\alpha}^3\}
        +
        \frac{1}{\sigma_0^2}
        \tr\{\bs H_{\alpha}((\bs I-\bs H_{\alpha})(\bs X\bs \beta_0))^{\otimes 2}\}.
\end{align*}
\end{prop}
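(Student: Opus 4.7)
My plan is to compute $\mathbb{E}_{\bs y}[\JFV(\alpha)]$ directly by exploiting the Gaussian form of the quasi-posterior, and then to subtract $\Delta(\alpha)=\tr\{\bs H_\alpha\}$ from Lemma~\ref{lem:explicit_difference}. Writing $\log f(\bs y\mid \bs X,\bs\beta) = -\tfrac{1}{2\sigma_0^2}\|\bs y-\bs X\bs\beta\|_2^2 + \mathrm{const}$, I would set $\hat{\bs r} := (\bs I_n-\bs H_\alpha)\bs y$ and $\bs u := \bs X(\bs\beta-\hat{\bs\beta}_\alpha)$, so that $\bs y-\bs X\bs\beta = \hat{\bs r}-\bs u$. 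Under the quasi-posterior~(\ref{eq:posterior}), $\bs u \mid \bs y$ is centred Gaussian with covariance $\bs X\Q\bs X^\top$, and a short algebraic manipulation using~(\ref{eq:Q}) gives
\begin{align*}
\bs X\Q\bs X^\top
=\sigma_0^2\cdot n^{-1}\bs X\bigl(n^{-1}\bs X^\top\bs X+\alpha\bs I_p\bigr)^{-1}\bs X^\top
=\sigma_0^2\bs H_\alpha,
\end{align*}
hence $\bs u\mid \bs y\sim \normal(\bs 0,\sigma_0^2\bs H_\alpha)$.

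Expanding $\|\hat{\bs r}-\bs u\|_2^2 = \|\hat{\bs r}\|_2^2 - 2\hat{\bs r}^\top \bs u + \|\bs u\|_2^2$, the deterministic first term contributes no variance, and the cross-covariance $\mathrm{Cov}_{\bs u}[\hat{\bs r}^\top\bs u,\|\bs u\|_2^2]=0$ by the vanishing of odd Gaussian moments. The standard identities $\mathbb{V}_{\bs u}[\hat{\bs r}^\top\bs u]=\sigma_0^2\,\hat{\bs r}^\top \bs H_\alpha \hat{\bs r}$ and $\mathbb{V}_{\bs u}[\|\bs u\|_2^2]=2\sigma_0^4\tr\{\bs H_\alpha^2\}$ then yield
\begin{align*}
\JFV(\alpha)
 = \frac{1}{\sigma_0^2}\,\hat{\bs r}^\top \bs H_\alpha \hat{\bs r}
 + \frac{1}{2}\tr\{\bs H_\alpha^2\}.
\end{align*}
This is the joint analogue of the intermediate identity underlying Lemma~\ref{lem:explicit_difference}, with the Hadamard-diagonal expression $\tr\{\bs H_\alpha\circ \bs H_\alpha\}$ replaced by the full Frobenius trace $\tr\{\bs H_\alpha^2\}$; this single structural substitution is exactly what distinguishes the joint from the sample-wise formula.

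To finish, I would take $\mathbb{E}_{\bs y}$ of the displayed identity using $\bs y=\bs X\bs\beta_0+\bs\varepsilon$ and the quadratic-form expectation $\mathbb{E}[\bs y^\top\bs M\bs y]=\bs\mu^\top\bs M\bs\mu+\sigma_0^2\tr\{\bs M\}$ with $\bs\mu=\bs X\bs\beta_0$ and $\bs M=(\bs I_n-\bs H_\alpha)\bs H_\alpha(\bs I_n-\bs H_\alpha)$. Expanding $\tr\{\bs M\}=\tr\{\bs H_\alpha\}-2\tr\{\bs H_\alpha^2\}+\tr\{\bs H_\alpha^3\}$ and absorbing the $\tfrac{1}{2}\tr\{\bs H_\alpha^2\}$ term yields
\begin{align*}
\mathbb{E}_{\bs y}[\JFV(\alpha)]
= \tr\{\bs H_\alpha\} - \tfrac{3}{2}\tr\{\bs H_\alpha^2\} + \tr\{\bs H_\alpha^3\}
+\frac{1}{\sigma_0^2}\tr\bigl\{\bs H_\alpha\bigl((\bs I_n-\bs H_\alpha)(\bs X\bs\beta_0)\bigr)^{\otimes 2}\bigr\},
\end{align*}
and subtracting $\Delta(\alpha)=\tr\{\bs H_\alpha\}$ gives the stated identity. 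I do not anticipate any serious obstacle; the only delicate points are verifying $\bs X\Q\bs X^\top=\sigma_0^2\bs H_\alpha$ and using parity to annihilate the linear-quadratic cross-covariance, both of which are routine.
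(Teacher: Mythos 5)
Your proposal is correct and follows essentially the same route as the paper's proof: both exploit the exact Gaussian form of the quasi-posterior (via $\bs X\Q\bs X^{\top}=\sigma_0^2\bs H_{\alpha}$, vanishing odd moments, and the fourth-moment trace identity) to reduce $\JFV(\alpha)$ to $\sigma_0^{-2}\,\bs y^{\top}(\bs I_n-\bs H_{\alpha})\bs H_{\alpha}(\bs I_n-\bs H_{\alpha})\bs y+\tfrac{1}{2}\tr\{\bs H_{\alpha}^2\}$, and then take the $\bs y$-expectation of the quadratic form. Your decomposition into the residual $\hat{\bs r}$ and the posterior fluctuation $\bs u$ is merely a cleaner bookkeeping of the paper's expansion in terms of $\bs m$ and $\zeta$.
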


This exhibits an interesting correspondence between FV and J-FV (see Lemma~\ref{lem:explicit_difference} and Proposition~\ref{lem:expectation_of_J-FV}); replacing the Hadamard products in $\mathbb{E}_{\bs y}[\FV(\alpha ; \wrtX)]$ with simple matrix products yields $\mathbb{E}_{\bs y}[\JFV(\alpha;\wrtX)]$. 
Since the simple matrix products in $\mathbb{E}_{\bs y}[\JFV(\alpha;\wrtX)]$ do not vanish as $n \to \infty$, J-FV is not an asymptotically unbiased estimator of the generalization gap. 

\end{remark}

\begin{remark}[Extension of the theorem]
Theorem~\ref{theo:convergence_of_FV_Gaussian} is further extended in Theorem~\ref{theo:convergence_of_FV}, which proves the convergence of FV under milder conditions. 
Its proof is shown in Supplement~\ref{sec:mainproof}.

\begin{theorem}
\label{theo:convergence_of_FV}
Let $n,p \in \mathbb{N}$ and assume the setting in Section~\ref{subsec:problem_setting}. 
Let $\bs X\in \mathbb{R}^{n \times p}$ be a random matrix and let $\mathbb{S}^{n-1}=\{\bs u \in \mathbb{R}^d \mid \|\bs u\|_2=1\}$ be a $(n-1)$-sphere. 
Let $q_{1},q_{2},\ldots,q_{n}$ be marginal probability densities of the left-singular vectors $\bs u_{1},\bs u_{2},\ldots,\bs u_{n} \in \mathbb{S}^{n-1}$ of $\bs X$. 
Write
\[
b := p^{1/2}\|\bs \beta_0\|_{\infty}, \quad
\psi := \max_{i=1,2,\ldots,n} 
\max_{\bs u \in \mathbb{S}^{n-1}}
\{q_{i}(\bs u)\} \int_{\mathbb{S}^{n-1}}\diff \bs v, 
\quad
\eta := \frac{1}{n} \tr\{\bs X^{\top}\bs X\}.
\]
Then, there exists an absolute constant $C>0$ such that, for any $\varepsilon,\alpha>0$, we have 
\begin{align}
    \mathbb{P}_{\wrtX}\left(
        |\mathbb{E}_{\bs y}[\FV(\alpha ; \wrtX)] - \Delta(\alpha ; \wrtX)|
        >
        \varepsilon 
        \mid \eta
    \right)
&\le 
    C \left(
        \frac{\psi \eta^2}{\alpha^2}
        \frac{1}{n\varepsilon}
        +
        \frac{\psi \eta^4}{\alpha^4}
        \frac{1}{n\varepsilon}
        +
        \frac{\psi \eta^4 b^4}{\alpha^2 \sigma_0^4}
        \frac{1}{n\varepsilon^{2}}
    \right),
    \label{eq:bound_bias_general}
    % \quad \eta \text{-a.s.}
\end{align}
where
$\mathbb{P}_{\wrtX}(\cdot \mid \eta)$ denotes the conditional probability of $\wrtX$ given $\eta$.
\end{theorem}

Theorem \ref{theo:convergence_of_FV_Gaussian} (Gaussian covariates) follows from Theorem~\ref{theo:convergence_of_FV} with $\psi=1$; see Proposition~7.1 of \cite{eaton1989group}, which proves that the left-singular vectors of the Gaussian design matrix follow a uniform distribution over the unit $(n-1)$-sphere $\mathbb{S}^{n-1}$, i.e., $q_{i}(\bs u)=\mathbbm{1}_{\mathbb{S}^{n-1}}(\bs u)/\int_{\mathbb{S}^{n-1}}\diff \bs v$ with $\mathbbm{1}_{\mathbb{S}^{n-1}}(\cdot)$ the indicator function of $\mathbb{S}^{n-1}$.
\end{remark}

\begin{remark}[Definition of the generalization gap]
Regarding the definition of the generalization gap $\Delta(\alpha ; \wrtX)$, we may consider another design matrix $\bs X^*$ for generating $\bs y^*$. 
However, the covariate difference is less effective in the generalization gap~(\ref{eq:generalization_error}) since we focus on the prediction of the conditional random variable $\bs y \mid \bs X$ and not the covariate $\bs X$. 
For theoretical simplicity, we employed a single design matrix $\bs X$ to generate both outcomes $\bs y,\bs y^*$. 
\end{remark}

\section{Langevin Functional Variance}
\label{sec:langevin_approximation_of_bayesian_posterior}

Despite FV being theoretically attractive as proved in Theorem~\ref{theo:convergence_of_FV}, there exist two computational difficulties: 
\begin{enumerate}[{(D1)}]
\item generating samples from the quasi-posterior~(\ref{eq:posterior}) requires the $p \times p$ matrix $\bs Q_{\alpha}$, which is computationally intensive for overparameterized models $n \ll p$, and 
\item computing the quasi-posterior~(\ref{eq:posterior}) is inconsistent with gradient-based optimization approaches, such as the stochastic gradient descent~(SGD), which are often used in optimizing overparameterized models.
\end{enumerate}
To resolve these difficulties (D1) and (D2), 
we consider a Langevin approximation of the quasi-posterior in Section~\ref{subsec:Langevin_posterior}, and propose Langevin FV~(LFV) for linear models in Section~\ref{subsec:Langevin_FV}; we then extend it to non-linear models in Section~\ref{subsec:non-linear}. 
We compare FV and the proposed LFV with the existing estimators based on TIC~\citep{takeuchi1976distribution} in Section~\ref{subsec:TIC}.

\subsection{Langevin approximation of the quasi-posterior}
\label{subsec:Langevin_posterior}

A key idea of our algorithm is approximating the quasi-posterior~(\ref{eq:posterior}) via a Langevin process. Starting with an estimator $\bs \gamma^{(1)}:=\hat{\bs \beta}_{\alpha}$, we stochastically update $\bs \gamma^{(t)}$ by
\begin{align}
    \bs \gamma^{(t+1)}
    =
    \bs \gamma^{(t)}
    -
    \frac{1}{4} \delta \kappa_n \frac{\partial \ell_{\alpha}(\bs \gamma^{(t)})}{\partial \bs \gamma}
    +
    \delta^{1/2} \bs e^{(t)}
    \quad (t=1,2,\cdots),
    \label{eq:Langevin}
\end{align}
where $\kappa_n:=n/\sigma_0^2$ and $\delta>0$ are  user-specified parameters, let $\{\bs e^{(1)},\bs e^{(2)},\cdots \}$ denote a sequence of i.i.d.~standard normal random vectors, and $\ell_{\alpha}$ is defined in (\ref{eq:ridge}). 
Then, the distribution of the Langevin process~\eqref{eq:Langevin} approximates the quasi-posterior as discussed in the following. 
%here, we discuss why this approximation works.
First, the Langevin process~(\ref{eq:Langevin}) is a discretization of the Ornstein-Uhlenbeck process
\begin{align*}
    \diff \tilde{\bs \gamma}_{\tau}
    =
    -\frac{1}{2}\bs Q_{\alpha}^{-1}(\tilde{\bs \gamma}_{\tau}-\hat{\bs \beta}_{\alpha}) \diff \tau 
    +
    \diff \bs e_{\tau}
    % \label{eq:OU}
\end{align*}
equipped with a Wiener process $\{\bs e_{\tau}\}_{\tau \ge 0}$, i.e., $\bs e_{\tau}-\bs e_{\tau'} \sim \normal_p(\bs 0,(\tau-\tau')\bs I_p)$ for any $\tau>\tau' \ge 0$. The distribution of $\tilde{\bs \gamma}_{\tau}$ with the initialization $\tilde{\bs \gamma}_{0}:=\hat{\bs \beta}_{\alpha}$ coincides with the quasi-posterior
\begin{align}
    \mathcal{D}(\tilde{\bs \gamma}_{\tau})
    =
    N_p(\hat{\bs \beta}_{\alpha},\bs Q_{\alpha}) \quad (\tau>0);
\label{eq:OU_convergence}
\end{align}
See \citet{Risken1996} page 156 for an example of the distribution~(\ref{eq:OU_convergence}). 
Next, with $n$ and $p$ fixed, for any sufficiently small $\phi>0$, 
Theorem 2 of \citet{cheng2018sharp} evaluates the 
$1$-Wasserstein distance between the distributions of the Ornstein-Uhlenbeck and the Langevin processes as
\begin{align}
    W_1(\mathcal{D}(\tilde{\bs \gamma}_{t\delta}),\mathcal{D}(\bs \gamma^{(t)})) 
    \le 
    \phi
    \label{eq:W1_Langevin}
\end{align}
for some $\delta =O(\phi^2/p)$ and $t \asymp p/\phi^2$; 
see Proposition~\ref{prop:W1_evaluation} in Supplement~\ref{sec:supporting_propositions} for details. 
Thus, the relations (\ref{eq:OU_convergence}) and (\ref{eq:W1_Langevin}) imply that the distribution $\mathcal{D}(\bs \gamma^{(t)})$ approximates the quasi-posterior $\normal_p(\hat{\bs \beta}_{\alpha},\bs Q_{\alpha})$.

This Langevin approximation resolves difficulties (D1) and (D2) since 
\begin{enumerate}[{(S1)}]
\item the Langevin process~(\ref{eq:Langevin}) is computed with only the $1$st order gradient of the squared loss function $\ell_{\alpha}(\bs \gamma)$ defined in (\ref{eq:ridge}), meaning that the large matrix $\bs Q_{\alpha}$ is not explicitly computed, and
\item the Langevin process~(\ref{eq:Langevin}) is in the form of the gradient descent up to the normal noise $\delta^{1/2}\bs e^{(t)}$; this process can be implemented consistently with gradient-based algorithms, which are often used to optimize overparameterized models. 
\end{enumerate}

\begin{remark}[The other approach for the posterior approximation]
Besides the Langevin dynamics, we can use the other approaches for approximating the posterior distribution. One such approach is to utilize the stochastic noise in stochastic gradient descent~\citep{pmlr-v32-satoa14,mandt2017sgd}. Though powerful, this approach relies on the normal approximation in the stochastic gradient and the normal approximation is hardly expected in the overparameterized regime, which results in the computational instability.
\end{remark}

\subsection{Langevin functional variance}
\label{subsec:Langevin_FV}

Using the Langevin approximation~(\ref{eq:Langevin}), we propose LFV:
for a time step $T \in \mathbb{N}$,
let $\{\bs \gamma^{(t)}\}_{t=1,2,\ldots,T}$ be the samples
from the Langevin process.
Let $\hat{\mathbb{V}}_{\bs \gamma}[\cdot]$ denotes an empirical variance with respect to $\{\bs \gamma^{(t)}\}_{t=1,2,\ldots,T}$.
Let $\mu_i^{(t)}:=\bs x_i^{\top}\bs \gamma^{(t)}$. 
Then, we define LFV as
\begin{align}
    \LFV(\alpha;\wrtX)
    &:=
    \sum_{i=1}^{n}
    \hat{\mathbb{V}}_{\bs \gamma}[\log f(y_i \mid \bs x_i,\bs \gamma)] \nonumber \\
    &=
    \scalebox{0.9}{$\displaystyle 
    \sum_{i=1}^{n}
    \frac{1}{T}
    \sum_{t=1}^{T}\left\{
     \frac{1}{2\sigma_0^2}
     \left(y_i-\mu_i^{(t)}\right)^2
     -
     \frac{1}{T}\sum_{t'=1}^{T} \frac{1}{2\sigma_0^2}\left(y_i-\mu_i^{(t')}\right)^2
    \right\}^2
    $}.
    \label{eq:Langevin_FV}
\end{align}
As shown in \eqref{eq:W1_Langevin}, the distribution of the Langevin process approaches the quasi-posterior: LFV is expected to approximate FV for a sufficiently large $T \in \mathbb{N}$. 
Thus LFV is expected to inherit the nature of FV, such as the asymptotic unbiasedness when estimating the generalization gap.

The distributional approximation of the high-dimensional vector $\bs \gamma \in \mathbb{R}^p$ faces the curse of dimensionality. However, FV is a statistic taking a value in $\mathbb{R}$; empirically, FV and LFV approximate the generalization gap $\Delta(\alpha ; \wrtX)$ with a reasonable number of samples $T \in \mathbb{N}$, even if the dimension $p$ is relatively large. See numerical experiments in Section~\ref{subsec:experiments_ics_convergence}. 

Lastly, let us mention the computational time of FV and LFV.
For each iteration, the number of operations to obtain the gradient in LFV is $O(p^{2})$,
while that to compute the covariance matrix in FV is $O(p^3)$: therefore, the computational time is expected to be reduced, as also demonstrated for linear models in Remark~\ref{remark:computational_complexity}.

\subsection{Application to non-linear models}
\label{subsec:non-linear}
This subsection provides a procedure in applying LFV~(\ref{eq:Langevin_FV}) to a non-linear overparameterized model $g_{\bs \theta}(\bs z_i)$ equipped with a parameter vector $\bs \theta$ and an input vector $\bs z_i$. 
The procedure is as follows: first, by replacing $\ell_{\alpha}(\bs \beta)$ in the Langevin process (\ref{eq:Langevin}) with 
\begin{align*}
    \rho_{\alpha}(\bs \theta)
    &:=
    \frac{1}{n}\sum_{i=1}^{n}(y_i-g_{\bs \theta}(\bs z_i))^2
    +
    \alpha \|\bs \theta\|_2^2,
\end{align*}
the update (\ref{eq:Langevin}) yields a sequence $\{\bs \gamma^{(t)}\}_{t=1,2,\ldots}$.
Next, by substituting $\mu_i^{(t)}:=g_{\bs \gamma^{(t)}}(\bs z_i)$ in (\ref{eq:Langevin_FV}),
we obtain LFV for the non-linear model $g_{\bs \theta}$. 
We here emphasize that, 
while the exact form of the full quasi-posterior for non-linear models 
 is difficult to obtain
 and thus computing FV for non-linear models is almost prohibited, LFV can be applied to even such non-linear models.

The gradient of the overparameterized model $g_{\bs \theta}$ is compatible with the gradient of its linear approximation. 
This application works if the linear approximation of non-linear neural network is reasonable around the initial estimate. Again, we remark that this assumption is important in the neural tangent kernel literature~\citep{NEURIPS2018_5a4be1fa,NEURIPS2019_dbc4d84b}.

\subsection{Comparison to the existing methods}
\label{subsec:TIC}

Here, we compare FV and LFV to existing methods for the estimation of the generalization gap using TIC~\citep{takeuchi1976distribution}.
TIC estimates the generalization gap of regular statistical models by using a quantity
\begin{align}
    \tr\left\{\hat{\bs F}\hat{\bs G}^{-1}\right\}
    \label{eq:TIC}
\end{align}
equipped with $p \times p$ matrices 
\begin{align*} 
\hat{\bs F}
&:=
\sum_{i=1}^{n}
\frac{\partial \log f(y_i \mid \bs x_i,\bs \beta)}{\partial \bs \beta}
\frac{\partial \log f(y_i \mid \bs x_i,\bs \beta)}{\partial \bs \beta^{\top}}
\bigg|_{\bs \beta=\hat{\bs \beta}}
\text{ and } 
\hat{\bs G}
:=
\sum_{i=1}^{n}
\frac{\partial^2 \log f(y_i \mid \bs x_i, \bs \beta)}{\partial \bs \beta \partial \bs \beta^{\top}}
\bigg|_{\bs \beta=\hat{\bs \beta}},
\end{align*}
where $\hat{\bs \beta}$ is a maximum likelihood estimate.
However, TIC cannot be applied to singular statistical models whose Hessian matrix $\hat{\bs G}$ degenerates (i.e., the inverse of $\hat{\bs G}$ does not exist), such as overparameterized models. 
To overcome this limitation of TIC, \citet{thomas2020interplay} replace the inverse matrix $\hat{\bs G}^{-1}$ in (\ref{eq:TIC}) with the $\kappa$-generalized inverse matrix $\hat{\bs G}^+_{\kappa}$; $\hat{\bs G}^+_{\kappa}$ is defined as $\bs V \bs \Sigma^{+}_{\kappa} \bs V^{\top}$, 
% where $U$ and $V$ are matrices of the left- and right- singular vectors of 
where $\bs V$ is a matrix of the eigenvectors of $\hat{\bs G}$, and $\bs \Sigma^{+}_{\kappa}$ is the diagonal matrix of which the $j$-th  diagonal component is
\begin{align*}
    \lambda_j (\hat{\bs G}_{\kappa}^+)
    :=
    \begin{cases}
        1/\lambda_j(\hat{\bs G}) & (\lambda_j(\hat{\bs G}) > \kappa) \\
        0 & (\lambda_j(\hat{\bs G}) \le \kappa) \\
    \end{cases},
    \quad
    (j=1,2,\ldots,p)
\end{align*}
with $\{\lambda_j (\hat{\bs G}): j=1,\ldots,p\}$ singular values of $\hat{\bs G}$. 
This TIC modification of \citet{thomas2020interplay} is numerically examined in Section~\ref{subsec:experiments_ics_convergence}, for overparameterized linear regression; it rather estimates the number of nonzero eigenvalues in $\hat{\bs G}=n^{-1}\bs X^{\top}\bs X$, i.e., the number of nonzero singular values of $\bs X$, but not the generalization gap $\Delta(\alpha ; \wrtX)$.

Instead of modifying TIC, we can employ RIC proposed by  \citet{shibata1976selection}. 
RIC replaces the Hessian matrix $\hat{\bs G}$ in TIC with the regularized inverse matrix $(\hat{\bs G}+\alpha \bs I)^{-1}$. \citet{moody1992} and \citet{murata1994network} further generalized RIC to arbitrary loss functions and demonstrated its application in shallow neural network models with a small number of hidden units. 
RIC has much in common with FV; specifically, RIC is also an asymptotically unbiased estimator of the Gibbs generalization gap $\Delta(\alpha ; \wrtX)$ for overparameterized linear regression, and numerically, RIC behaves similarly to FV (and LFV) for the experiments discussed in Section~\ref{subsec:experiments_ics_convergence}. 
However, RIC still requires the inverse matrix of the $p \times p$ matrix $\hat{\bs G}+\alpha\bs I$, and its computational cost is intensive in the overparameterized setting ($p \ge n$).

\section{Numerical Experiments}
\label{sec:numerical_experiments}

This section 
presents the numerical evaluation of the Langevin FV~(\ref{eq:Langevin_FV}) using  both linear and non-linear models.

\subsection{LFV and baselines in linear models}
\label{subsec:experiments_ics_convergence}

We first evaluate LFV~(\ref{eq:Langevin_FV}) and compare it with some baselines in linear overparameterized models. 
The set-up of the numerical experiments is summarized as follows:
\begin{itemize}
    \item \textbf{Synthetic data generation}: For $p=1.5n$, orthogonal matrices $\bs U\in \mathbb{R}^{n \times n},\bs V \in \mathbb{R}^{p \times n}$ are i.i.d.~from the uniform distribution over the set of orthogonal matrices satisfying $\bs U^{\top}\bs U=\bs V^{\top}\bs V=\bs I_n$, respectively. Entries of $\bs \beta_0 \in \mathbb{R}^{p}$ are i.i.d.~from $\normal(0,1/p)$. 
    With given singular values $\{s_i\}_{i=1}^{n}$, the design matrix $\bs X:=\bs U\bs S\bs V^{\top}$ is computed. 
    $\bs y$ is generated $50$ times from $\normal_n(\bs X\bs \beta_0,\bs I_n)$, i.e., $\sigma_0^2=1$. 
    \item \textbf{Singular values}:
    We consider three different types of singular values: 
(i) $\bs X$ has a fixed intrinsic dimension $d_*=10$, i.e., $s_1=\cdots=s_{10}=n^{1/2}$ and $s_{11}=s_{12}=\cdots=s_n=0$, 
(ii) $s_i=n^{1/2}i^{-1}$, and
(iii) $s_i=n^{1/2}i^{-1/2}$. 
Theorem~\ref{theo:convergence_of_FV} proves the convergence of FV in settings (i) and (ii) since $\lim_{n \to \infty}n^{-1}\sum_{i=1}^{n}s_i^2<\infty$ in (i) and (ii), whereas 
it does not prove the convergence in setting (iii) since $\lim_{n \to \infty}n^{-1}\sum_{i=1}^{n}s_i^2$ is not bounded in (iii).
    \item \textbf{Evaluation}: 
    We evaluate the following baselines and LFV with $50$ experiments. 
    Throughout these experiments, we employ $\alpha=0.1$ for ridge regularization~(\ref{eq:ridge}). 
    
    \begin{enumerate}[{(a)}]
    \item \textbf{TIC} penalty~\citep{takeuchi1976distribution} is extended to the overparameterization setting~\citep{thomas2020interplay} by replacing the inverse of the Hessian matrix $\hat{\bs G}$ in TIC with the $\kappa$-generalized inverse matrix $\hat{\bs G}^{+}_{\kappa}$. 
    See Section~\ref{subsec:TIC} for the definition of $\text{TIC}(\kappa):=\tr\{\hat{\bs F}\hat{\bs G}^{+}_{\kappa}\}$. 
    \item \textbf{FV} is empirically computed with $T=15n$ samples of $\bs \beta$ generated from the quasi-posterior~(\ref{eq:posterior}). 
    Its expectation shown in Lemma~\ref{lem:explicit_difference} is also computed. 
    \item \textbf{LFV} is computed with $\delta = 1/(10n)$ and $T=15n$ Langevin samples $\bs \gamma$ (see eq.~(\ref{eq:Langevin})).
    \end{enumerate}
    %Although the design matrix $\bs X$ is generated only once, the following results are almost stable even if $\bs X$ is re-generated. 
\end{itemize}

\begin{table}[!ht]
\centering
\caption{The generalization gap $\Delta(\alpha ; \wrtX)$ and its estimates (TIC, FV, LFV) with the standard deviations in setting (i) where $s_i=n^{1/2}$ for $i=1,2,\ldots,10$ and $0$ otherwise. The generalization gap and LFV estimate are gray-colored.}
\label{table:dstar=10}
\scalebox{1}{
\begin{tabular}{rllll}
\toprule 
 $n$ & 100 & 200 & 300 & 400 \\ 
\midrule 
\rowcolor{Gray}
 $\Delta(\alpha ; \wrtX)$ & 9.091 & 9.091 & 9.091 & 9.091 \\ 
\cmidrule(r){1-1} 
$\text{TIC}\,(\kappa=0)$ & 8.752 $\pm$ 1.284 & 9.272 $\pm$ 0.941 & 9.633 $\pm$ 0.829 & 9.536 $\pm$ 0.769 \\ 
$\text{TIC}\,(\kappa=0.1)$ & 8.752 $\pm$ 1.284 & 9.272 $\pm$ 0.941 & 9.633 $\pm$ 0.829 & 9.536 $\pm$ 0.769 \\ 
$\FV(\alpha;\wrtX)$ & 8.432 $\pm$ 1.180 & 8.658 $\pm$ 0.864 & 8.907 $\pm$ 0.761 & 8.781 $\pm$ 0.693 \\ 
$\mathbb{E}_{\bs y}[\FV(\alpha;\wrtX)]$ & 8.527 & 8.810 & 8.902 & 8.946 \\ 
\rowcolor{Gray}
$\LFV(\alpha;\wrtX)$ & 8.317 $\pm$ 1.144 & 8.815 $\pm$ 0.931 & 9.044 $\pm$ 0.8463 & 8.951 $\pm$ 0.743 \\ 
\bottomrule 
\end{tabular}
}
\end{table}

\begin{table}[!ht]
\centering
\caption{The generalization gap $\Delta(\alpha ; \wrtX)$ and its estimates (TIC, FV, LFV) with the standard deviations in setting (ii), where $s_i=n^{1/2}i^{-1}$. The generalization gap and LFV estimate are gray-colored.}
\label{table:nu=2}
\scalebox{1}{
\begin{tabular}{rllll}
\toprule 
 $n$ & 200 & 400 & 600 & 800 \\ 
\midrule 
\rowcolor{Gray}
 $\Delta(\alpha ; \wrtX)$ & 4.417 & 4.442 & 4.451 & 4.455 \\ 
\cmidrule(r){1-1} 
$\text{TIC}\,(\kappa=0)$ & 192.5 $\pm$ 18.01 & 390.1 $\pm$ 24.97 & 589.2 $\pm$ 35.37 & 787.6 $\pm$ 39.41 \\ 
$\text{TIC}\,(\kappa=0.1)$ & 135.6 $\pm$ 12.83 & 194.1 $\pm$ 12.41 & 239.8 $\pm$ 14.52 & 277.7 $\pm$ 13.92 \\ 
$\FV(\alpha;\wrtX)$ & 4.295 $\pm$ 0.415 & 4.302 $\pm$ 0.354 & 4.351 $\pm$ 0.306 & 4.369 $\pm$ 0.255 \\ 
$\mathbb{E}_{\bs y}[\FV(\alpha;\wrtX)]$ & 4.305 & 4.385 & 4.412 & 4.425 \\ 
\rowcolor{Gray}
$\LFV(\alpha;\wrtX)$ & 3.985 $\pm$ 0.491 & 4.193 $\pm$ 0.467 & 4.272 $\pm$ 0.357 & 4.341 $\pm$ 0.305 \\ 
\bottomrule 
\end{tabular}
}
\end{table}

\begin{table}[!ht]
\centering
\caption{
The generalization gap $\Delta(\alpha ; \wrtX)$ and its estimates (TIC, FV, LFV) with the standard deviations in setting (iii), where $s_i=n^{1/2}i^{-1/2}$. The generalization gap and LFV estimate are gray-colored.}
\label{table:nu=1}
\scalebox{1}{
\begin{tabular}{rlllll}
\toprule 
 $n$ & 200 & 400 & 600 & 800 \\ 
\midrule 
\rowcolor{Gray}
 $\Delta(\alpha ; \wrtX)$ & 29.98 & 36.66 & 40.63 & 43.46 \\ 
\cmidrule(r){1-1} 
$\text{TIC}\,(\kappa=0)$ & 149.7 $\pm$ 14.68 & 335.3 $\pm$ 21.68 & 527.6 $\pm$ 33.40 & 721.0 $\pm$ 37.64 \\ 
$\text{TIC}\,(\kappa=0.1)$ & 149.7 $\pm$ 14.68 & 335.3 $\pm$ 21.68 & 527.6 $\pm$ 33.40 & 721.0 $\pm$ 37.64 \\ 
$\FV(\alpha;\wrtX)$ & 24.68 $\pm$ 2.250 & 32.32 $\pm$ 2.026 & 37.06 $\pm$ 2.292 & 40.33 $\pm$ 2.046 \\ 
$\mathbb{E}_{\bs y}[\FV(\alpha;\wrtX)]$ & 24.78 & 32.60 & 37.26 & 40.53 \\ 
\rowcolor{Gray}
$\LFV(\alpha;\wrtX)$ & 22.32 $\pm$ 2.092 & 30.77 $\pm$ 2.100 & 35.83 $\pm$ 2.346 & 39.42 $\pm$ 2.052 \\ 
\bottomrule  
\end{tabular}
}
\end{table}

\bigskip 
The results are presented in the following Table~\ref{table:dstar=10}--\ref{table:nu=1}. Overall, FV and LFV were able to more effectively estimate the generalization gap $\Delta(\alpha ; \wrtX)$ ($p=1.5n,n\to \infty$), as compared to TIC, and their biases were not drastically different.
TIC was entirely inaccurate in settings (ii) and (iii); together with the result of (i), TIC was able to estimate the number of nonzero eigenvalues in the Hessian matrix $\hat{\bs G}=n^{-1}\bs X^{\top}\bs X$ (i.e., the number of nonzero singular values of $\bs X$), which is generally different from the generalization gap $\Delta(\alpha ; \wrtX)$.

\begin{remark}[Computational time]
\label{remark:computational_complexity}
Let us mention computational time of FV and LFV.
We check the computation time
by the linear synthetic dataset with $n=100,p=2000,\alpha=0.01,s_i=n^{1/2}i^{-1/2},T=500$ five times.
Among five experiments, 
the processing time for LFV is $5.33 \pm 0.09$ seconds,
and that for FV is $18.87 \pm 0.18$ seconds,
which implies that LFV is in fact faster than FV.
For these experiments, we used AMD Ryzen 7 5700X processors. The computation is not parallelized for fair comparison. 
\end{remark}

\subsection{LFV for non-linear models}
\label{subsec:experiments_nn_synthetic}

This subsection presents the evaluation of the Langevin FV for non-linear neural networks via synthetic dataset experiments. We use the procedure in  Section~\ref{subsec:non-linear} in applying LFV to non-linear models. 
The set-up is summarized as follows:
\begin{itemize} 
\item \textbf{Synthetic data}: 
Let $n \in \{50,500,1000\},\sigma^2=1$. 
For $i=1,2,\ldots,n$, we generate $\bs z_i \overset{\text{i.i.d.}}{\sim} \normal_d(0,\bs I_d)$
and
$\varepsilon_i \overset{\text{i.i.d.}}{\sim} \normal(0,\sigma^2)$,
and set
$\mu_i:=\mu(\bs z_i)$ and $y_i:=\mu_i+\varepsilon_i$ with a function $\mu(\bs z):=3\tanh(\langle \bs z,\bs 1 \rangle/2)$. 

\item \textbf{Neural network~(NN)}: We employ a fully-connected one-hidden-layer NN:
\[ 
g_{\bs \theta}(\bs z):=\langle \bs \theta^{(2)}, \tanh(\bs \theta^{(1)}\bs z+\bs \theta^{(0)})\rangle
\]
with $M \in \{50,100,150\}$ hidden units, where $\bs \theta=(\bs \theta^{(0)},\bs \theta^{(1)},\bs \theta^{(2)}) \in \mathbb{R}^M \times \mathbb{R}^{M \times d} \times \mathbb{R}^M$ is a parameter vector (and thus the number of parameters is $p=M(d+2)$) and 
the function
$\tanh(\cdot)$ applies the hyperbolic tangent function $\tanh(z):=(\exp(z)-\exp(-z))/(\exp(z)+\exp(-z))$ entry-wise. 
Let $\tilde{\bs \theta}_0$ be a parameter satisfying 
$g_{\tilde{\bs \theta}_0}(\bs z)
=
\mu(\bs z)$. 
For each experiment, we employ the true parameter $\bs \theta_0$ that is i.i.d.~drawn from the element-wise independent normal distribution with mean $\tilde{\bs \theta}_0$ and the element-wise variance $0.01$. 
Further, we initialize the parameter $\bs \theta$ of the NN to be trained by the element-wise independent normal distribution whose mean is $\bs \theta_0$ with the larger element-wise variance $1$, and update $\bs \theta$ by a full-batch gradient descent with a learning rate $0.1$, ridge-regularization coefficient $\alpha=10^{-3}$ and, $0.3n$ iterations. 

\item \textbf{Langevin FV}: For each setting $(d,M) \in \{5,10,15\} \times \{50,100,150\}$, we take the average of LFV over $25$ experiments. 
In each experiment, we randomly generate $\{(\bs z_i,y_i)\}_{i=1}^{n} \subset \mathbb{R}^d \times \mathbb{R}$, train the NN $g_{\bs \theta}$, and compute $T = 3000$ iterations of the Langevin process with $\delta=10^{-5}$. 
We discard the first $0.1T$ iterations, and employ the remaining $0.9T$ iterations to compute LFV.

\item \textbf{Generalization gap $\tilde{\Delta}$}: 
For each setting $(d,M) \in \{5,10,15\} \times \{50,100,150\}$, 
we take average of the following generalization gap
\begin{align*}
\tilde{\Delta}
&:=
\frac{1}{2\sigma_0^2}
\left\{
\mathbb{E}_{\bs y^*}\left(
\frac{1}{n}\sum_{i=1}^{n}
\left\{
    y^*_{i}-\bs g_{\hat{\bs \theta}}(\bs z_{i})
\right\}^2
\right)
-
\frac{1}{n}\sum_{i=1}^{n}
\left\{
    y_{i}-\bs g_{\hat{\bs \theta}}(\bs z_{i})
\right\}^2
\right\}
\end{align*}
 over $200$ times experiments. 
 In each experiment, we randomly generate $\{(\bs z_i,y_i)\}_{i=1}^{n} \subset \mathbb{R}^d \times \mathbb{R}$, train the NN $g_{\bs \theta}$, and evaluate $\tilde{\Delta}$ by leveraging the ground-truth $\mu_i=\mu(\bs z_i)$ and $\sigma_0^2=1$. 
\end{itemize}

\begin{table}[!ht]
\centering 
\caption{The generalization gap and LFV for the NN model with $n=50$ and $T=3000$. LFV values for the overparameterized regime (i.e., $p=M(d+2)>n$) are gray-colored.}
\label{table:NN_experiment_n50}
\begin{tabular}{rcccccc}
\toprule 
& \multicolumn{2}{c}{$M=50$} & \multicolumn{2}{c}{$M=100$} & \multicolumn{2}{c}{$M=150$} \\ 
& LFV & $\tilde{\Delta}$ & LFV & $\tilde{\Delta}$ & LFV & $\tilde{\Delta}$ \\
\midrule 
$d=5$  & \cellcolor{Gray} $2.90 \pm 0.84$ & $5.78$  & \cellcolor{Gray} $3.50 \pm 1.13$ & $5.84$  & \cellcolor{Gray} $3.85 \pm 0.90$ & $5.88$ \\
$d=10$ & \cellcolor{Gray} $3.98 \pm 1.22$ & $11.66$ & \cellcolor{Gray} $4.25 \pm 1.16$ & $11.80$ & \cellcolor{Gray} $5.79 \pm 2.03$ & $11.95$ \\
$d=15$ & \cellcolor{Gray} $4.73 \pm 1.70$ & $15.17$ & \cellcolor{Gray} $6.23 \pm 1.65$ & $15.55$ & \cellcolor{Gray} $7.28 \pm 1.80$ & $15.94$ \\
\bottomrule 
\end{tabular} 
\end{table}

\begin{table}[!ht]
\centering 
\caption{The generalization gap and LFV for the NN model with $n=500$ and $T=3000$. LFV values for the overparameterized regime (i.e., $p=M(d+2)>n$) are gray-colored.}
\label{table:NN_experiment_n500}
\begin{tabular}{rcccccc}
\toprule 
& \multicolumn{2}{c}{$M=50$} & \multicolumn{2}{c}{$M=100$} & \multicolumn{2}{c}{$M=150$} \\ 
& LFV & $\tilde{\Delta}$ & LFV & $\tilde{\Delta}$ & LFV & $\tilde{\Delta}$ \\
\midrule 
$d=5$ & $7.72 \pm 1.31$ & $8.85$ & \cellcolor{Gray} $8.42 \pm 1.18$ & $8.00$ & \cellcolor{Gray} $9.16 \pm 1.51$ & $8.48$ \\
$d=10$ & \cellcolor{Gray} $14.64 \pm 1.99$ & $17.79$ & \cellcolor{Gray} $15.79 \pm 1.69$ & $18.97$ & \cellcolor{Gray} $17.58 \pm 2.11$ & $19.56$ \\
$d=15$ & \cellcolor{Gray} $20.34 \pm 1.79$ & $29.85$ & \cellcolor{Gray} $24.68 \pm 2.07$ & $30.86$ & \cellcolor{Gray} $26.32 \pm 2.82$ & $31.24$ \\
\bottomrule 
\end{tabular} 
\end{table}

\begin{table}[!ht]
\centering
\caption{The generalization gap and LFV for the NN model with $n=1000$ and $T=3000$. LFV values for the overparameterized regime (i.e., $p=M(d+2)>n$) are gray-colored.}
\label{table:NN_experiment_n1000}
\begin{tabular}{rcccccc}
\toprule 
& \multicolumn{2}{c}{$M=50$} & \multicolumn{2}{c}{$M=100$} & \multicolumn{2}{c}{$M=150$} \\ 
& LFV & $\tilde{\Delta}$ & LFV & $\tilde{\Delta}$ & LFV & $\tilde{\Delta}$ \\
\midrule 
$d=5$ & $8.86 \pm 1.20$ & $9.13$ & $9.99 \pm 1.32$ & $9.80$ & \cellcolor{Gray} $10.78 \pm 1.38$ & $10.15$ \\
$d=10$ & $17.16 \pm 1.56$ & $23.46$ & \cellcolor{Gray} $19.09 \pm 1.55$ & $23.87$ & \cellcolor{Gray} $21.50 \pm 2.41$ & $23.99$ \\
$d=15$ & $25.42 \pm 2.04$ & $31.70$ & \cellcolor{Gray} $30.25 \pm 2.21$ & $31.82$ & \cellcolor{Gray} $32.81 \pm 2.30$ & $31.87$ \\
\bottomrule 
\end{tabular} 
\end{table}

LFV and generalization loss $\tilde{\Delta}$ for a non-linear NN with $M$ hidden units are shown in Tables~\ref{table:NN_experiment_n50}--\ref{table:NN_experiment_n1000}. 
LFV values for overparameterized models (i.e., $p=M(d+2)>n$) are in gray.
It was observed that the value of LFV provides a rough estimate of the generalization gap even for non-linear overparameterized NN models, though our overparameterized theories on FV and LFV are proved for linear models. 
Table \ref{table:NN_experiment_n50} ($n=50$) and Table \ref{table:NN_experiment_n1000} ($n=1000$) suggested that the biases of LFV reduces as the sample size grows.

\subsection{LFV for non-linear models using real datasets}
\label{subsec:experiments_nn_real}

In this subsection, 
we compare LFV to cross validation statistic using real datasets with small sample sizes, where we note that the cross validation becomes prohibited as the sample size becomes larger.
We use the procedure in Section~\ref{subsec:non-linear} in applying LFV to non-linear models. 
The set-up is summarized as follows.

\begin{itemize}
\item \textbf{Real data:} 
We collected the following 7 regression datasets from KEEL dataset repository~\citep{alcala2011keel}: 
``machineCPU'' ($n=208, d=6$), 
``wankara'' ($n=320, d=9$), 
``baseball'' ($n=336, d=16$), 
``dee'' ($n=364,d=6$), 
``autoMPG6'' ($n=391, d=5$), 
``autoMPG8'' ($n=391, d=7$), 
and ``stock'' ($n=949, d=9$). 
These datasets are selected so that our NN structure described below becomes overparameterized. 
For each dataset, we standardized (scaling and centering) the design matrix $\bs X$ and the target variable $\bs y$.

\item \textbf{Neural network~(NN)}: We employ the same architecture as the non-linear NN in Section~\ref{subsec:experiments_nn_synthetic}, with the number of hidden layers $M=100$. 
For the neural netweork training, we first initialize the parameter $\bs \theta$ randomly by the normal distribution $\normal_p(\bs 0,d^{-1/2}\bs I_p)$; we employ $20$ different random initializations for each dataset experiment. 
We train the NN by the fullbatch gradient descent (using the entire dataset) with the learning rate $0.01$, ridge-regularization coefficient $\alpha=0.1$. 
Gradient descent algorithm is terminated if $|\text{loss}_{t}-\text{loss}_{t-1}|/|\text{loss}_{t-1}|<10^{-6}$, where $\text{loss}_t$ denotes the training loss at the iteration $t$.

\item \textbf{Cross-validation~(CV)}: We divide each dataset into the training set ($90\%$) and test set ($10\%$) uniformly randomly. 
Starting from the NN parameters already trained with the entire dataset as shown above, we train NN by gradient descent with the learning rate $0.001$ using the divided training set. 
Gradient descent algorithm for the $j$th CV instance is terminated if 
$|\text{loss}_{j,t}-\text{loss}_{j,t-1}|/|\text{loss}_{j,t-1}|<10^{-6}$, where $\text{loss}_{j,t}$ denotes the training loss at the iteration $t$, using the training set in the $j$th CV instance. 
After the training, we compute 10-fold cross validation statistic: \begin{align*}
    \CV:= 
    \frac{1}{2\hat{\sigma}_0^2}
    \frac{1}{10}
    \sum_{j=1}^{10}
&\bigg\{
    \frac{1}{n_{\text{test}(j)}} 
    \sum_{i=1}^{n_{\text{test}(j)}} 
    \{y_{\text{test}(j),i}-g_{\hat{\bs \theta}_{\text{train}(j)}}(\bs x_{\text{test}(j),i})\}^2,
    \bigg\}
\end{align*}
where $\{(\bs x_{\text{test}(j),i},y_{\text{test}(j),i})\}_{i=1}^{n_{\text{test}}}$ denotes the $j$-th test set and $\{(\bs x_{\text{train}(j),i},y_{\text{train}(j),i})\}_{i=1}^{n_{\text{train}}}$ denotes the $j$-th training set. 
$g_{\hat{\bs \theta}_{\text{train}(j)}}$ is the NN trained with the $j$-th training set. 
Since the training of the NN was unstable in some of the training sets, we ignored a few instances of 10 training sets in CV.

\item \textbf{Langevin FV:} After training the NN $g_{\bs \theta}$, we compute $T=10000$ iterations of the Langevin process with
$\delta=5 \times 10^{-6}/\hat{\sigma}_0^2$, where $\hat{\sigma}_0^2:=(n-1)^{-1}\sum_{i=1}^{n} 
    \{y_{i}-g_{\hat{\bs \theta}}(\bs x_i)\}^2$ denotes the estimator of $\sigma_0^2$ for each dataset. 
    We discard first $2500$ iterations and compute LFV using the remaining $7500$ iterations; 
    we further compute 
    \[
        \underbrace{
        \frac{1}{2\hat{\sigma}_0^2}
        \sum_{i=1}^{n}
            \{y_{i}-g_{\hat{\bs \theta}_{\text{train}}}(\bs x_{i})\}^2
        }_{(\text{training loss})}
        +
        \LFV(\alpha;\X),
    \]
    where it can be regarded as a Langevin variant of WAIC~\citep{watanabe2010asymptotic}, and compare it to the 
    CV statistic approximating the generalization loss.

\item \textbf{Experimental Setting:} we compute $\CV$ statistic and training loss+LFV for each dataset. 
More precisely, we compute the average and standard deviation of these values over different $20$ random initializations.

\end{itemize}

Results are shown in Table~\ref{table:real}. 
Overall, training loss+LFV roughly approximated the CV statistic; they are expected to be compatible as both of these values approximate the generalization loss. 
Further, we computed the Spearman's rank correlation between the averaged training loss+LFV and the averaged CV statistic (over $20$ random initializations), for $7$ datasets. 
The correlation was $0.75$, while that between the training loss (without LFV) and CV statistic was $-0.75$. 

In addition to the computational burden, 
we note that CV statistic became unstable even in these experiments, as the multiple NNs trained with different CV instances fell in different local optima. 
So we ignored several CV instances for more stable computation; LFV was more stably computed as LFV was computed with only a single training of the NN.

Therefore, we expect that the proposed LFV can be used as a substitute of the cross validation statistic for real datasets, even with larger sample sizes.

\begin{table}
\centering
\caption{CV statistic and training loss+LFV over 20 different initializations. 
$n$ denotes the number of samples, and $p=M(d+2)$ denotes the number of parameters used in NN. 
}
\label{table:real}
\begin{tabular}{lccc|ccc}
\hline
Dataset & $n$ & $p$ & $\CV$ & Training loss+LFV\\
\hline
machineCPU & 208 & 800 & $0.637 \pm 0.034$ & $0.646 \pm 0.123$ \\
wankara & 320 & 1100 & $0.417 \pm 0.019$ & $0.560 \pm 0.068$ \\
baseball & 336 & 1800 & $0.671 \pm 0.011$ & $0.661 \pm 0.103$ \\
dee & 364 & 800 & $0.484 \pm 0.025$ & $0.622 \pm 0.107$\\
autoMPG6 & 391 & 700 & $0.523 \pm 0.045$ & $0.610 \pm 0.101$\\
autoMPG8 & 391 & 900 & $0.431 \pm 0.055$ & $0.559 \pm 0.056$\\
stock & 949 & 1100 & $0.526 \pm 0.016$ & $0.580 \pm 0.122$\\
\hline
\end{tabular}
\end{table}

\section{Conclusion}
\label{sec:conclusion}
In this paper, we considered a Gibbs generalization gap estimation method for overparameterized models. 
We proved that FV works as an asymptotically unbiased estimator, even in an overparameterization setting. 
We proposed a Langevin approximation of FV for efficient computation and applied it to overparameterized linear regression and non-linear neural network models. 

\section*{Acknowledgments}
We thank the editor, the AE, and two anonymous reviewers for constructive comments and suggestions.
We also thank Eiki Shimizu for suggesting several references, and 
Tetsuya Takabatake and Yukito Iba for helpful discussions.

\section*{Supplementary material}
Supplementary material contains the proofs of theorems.

\clearpage

\appendixpageoff
\appendixtitleoff
\renewcommand{\appendixtocname}{Supplementary material}
\begin{appendices}

\noindent {\Large \textbf{Supplementary material:}}
\begin{center}
{\Large A generalization gap estimation for overparameterized models} \\
\vspace{0.5em}
{\Large via the Langevin functional variance} \\
\vspace{1.5em}
{\large Akifumi Okuno and Keisuke Yano}
\end{center}

\section{Proof of the main theorems}
\label{sec:mainproof}
This section presents the proof of the main theorems. 
We begin by proving Theorem \ref{theo:convergence_of_FV_Gaussian} under the assumption that Theorem \ref{theo:convergence_of_FV} holds.
We then provide the supporting lemmas and prove Theorem \ref{theo:convergence_of_FV}.

\subsection{Proof of Theorem~\ref{theo:convergence_of_FV_Gaussian}}
\label{sec:proof_of_theo:convergence_of_FV_Gaussian}

With the $i$-th diagonal entry $\sigma_{i}^2$ of $\bs \Sigma$, write $\tilde{\mu} := \sum_{i=1}^{p} \sigma_i^2 \: (=\tr\{\bs \Sigma\})$ and 
    $\tilde{\sigma}^2 := 2 \sum_{i=1}^{p} \sigma_i^4 (\le 2(\tr\{\bs \Sigma\})^2)$. 
As $\eta=n^{-1}\tr\{\bs X^{\top}\bs X\} = n^{-1}\sum_{i=1}^{n}\sum_{j=1}^{p}x_{ij}^2=n^{-1}\sum_{i=1}^{n}\ell_i$ holds for $\ell_i:=\|\bs x_i\|_2^2$ that are i.i.d.~from a distribution with mean $\tilde{\mu}$ and variance $\tilde{\sigma}^2$, 
Chebyshev's inequality proves \[\mathbb{P}(|\eta-\tilde{\mu}|>\tilde{\sigma}) \le 1/n.\] 
Together with Theorem~\ref{theo:convergence_of_FV} and $\psi=1$ for Gaussian covariates, this yields
\begin{align*}
&    \mathbb{P}_{\wrtX}\left( 
            |\mathbb{E}_{\bs y}[\FV(\alpha ; \wrtX)]-\Delta(\alpha ; \wrtX)| > \varepsilon
        \right) \\
&\hspace{6em}\le 
    \mathbb{P}_{\wrtX}\left( 
        |\mathbb{E}_{\bs y}[\FV(\alpha ; \wrtX)]-\Delta(\alpha ; \wrtX)| 
        > \varepsilon
        \, \mid 
        |\eta - \tilde{\mu}| \le \tilde{\sigma}
    \right) 
    \underbrace{\mathbb{P}_{\wrtX}(|\eta - \tilde{\mu}| \le \tilde{\sigma})}_{\le 1}
     \\
&\hspace{9em}+
    \underbrace{\mathbb{P}_{\wrtX}\left( 
        |\mathbb{E}_{\bs y}[\FV(\alpha ; \wrtX)]-\Delta(\alpha ; \wrtX)| 
        > \varepsilon
        \, \mid 
        |\eta - \tilde{\mu}| > \tilde{\sigma}
    \right)}_{\le 1}
    \mathbb{P}_{\wrtX}(|\eta - \tilde{\mu}|>\tilde{\sigma}) \\
&\hspace{6em}\le 
    \frac{C'}{n}
    \left(
        \frac{(\tilde{\mu} + \tilde{\sigma})^2}{\varepsilon\alpha^2}
        +
        \frac{(\tilde{\mu} + \tilde{\sigma})^4}{\varepsilon\alpha^4}
        +
        \frac{(\tilde{\mu} + \tilde{\sigma})^4 b^4}{\varepsilon^2 \alpha^2 \sigma_0^4}
    \right)
    +
    \frac{1}{n}
\end{align*}
for an absolute constant $C'>0$. 
Considering the inequality $\tilde{\mu}
+
\tilde{\sigma}
\le 
\tr\{\bs \Sigma\}
+
\sqrt{2}\tr\{\bs \Sigma\}
\le
(1+\sqrt{2})\tr\{\bs \Sigma\}
=(1+\sqrt{2}) \xi$, we conclude the assertion. 
\qed

\subsection{Supporting lemmas}

We present the supporting lemmas for the proof of Theorem \ref{theo:convergence_of_FV}.
The first lemma provides the explicit form of the expectation of $(\bs v^{\top} \bs A \bs v)^{2}$.
The second lemma provides the tail bound of the trace of the Hadamard product.

\begin{lemma}
\label{prop:expectation_over_unit_sphere}
Let $n \in \mathbb{N}$ and let $\bs A \in \mathbb{R}^{n \times n}$ be a diagonal matrix. 
For a random vector $\bs v$ that is uniformly distributed over $\mathbb{S}^{n-1}$, we have 
\begin{align} 
\mathbb{E}[(\bs v^{\top}\bs A\bs v)^2]=\frac{2 \tr\{\bs A^2\} + (\tr\{\bs A\})^2}{n(n+2)}.
\label{eq:expectation_QF_over_sphere}
\end{align}
\end{lemma}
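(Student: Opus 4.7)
The plan is to reduce the problem to computing the fourth-moment tensor $T_{ijkl} := \mathbb{E}[v_i v_j v_k v_l]$ of a uniform random vector on the sphere and then plugging in. Since $\bs A = \mathrm{diag}(a_1,\ldots,a_n)$, we have $\bs v^{\top} \bs A \bs v = \sum_{i=1}^{n} a_i v_i^2$, so
\[
\mathbb{E}[(\bs v^{\top}\bs A\bs v)^2] \;=\; \sum_{i,j=1}^{n} a_i a_j \, T_{iijj},
\]
and the task reduces to evaluating $T_{iijj}$.

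The main tool I would use is orthogonal invariance. Because the uniform distribution on $\mathbb{S}^{n-1}$ is invariant under $\mathcal{O}(n)$, the tensor $T_{ijkl}$ is an $\mathcal{O}(n)$-invariant symmetric $4$-tensor. A standard representation-theoretic fact (equivalently, an elementary symmetry argument: the only $\mathcal{O}(n)$-invariants built from Kronecker deltas on four indices are the three pairings) then forces
\[
T_{ijkl} \;=\; c\bigl(\delta_{ij}\delta_{kl} + \delta_{ik}\delta_{jl} + \delta_{il}\delta_{jk}\bigr)
\]
for a single constant $c$ depending only on $n$. If one prefers to avoid the invariance argument, the same conclusion follows from writing $\bs v = \bs g/\|\bs g\|_2$ with $\bs g \sim \normal_n(\bs 0, \bs I_n)$, applying Isserlis' theorem to get the Gaussian fourth moments, and using the independence of $\|\bs g\|_2$ from $\bs g/\|\bs g\|_2$ to handle the normalization.

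To pin down $c$, I would take the full contraction and exploit $\|\bs v\|_2 = 1$:
\[
1 \;=\; \mathbb{E}[\|\bs v\|_2^4] \;=\; \sum_{i,j=1}^{n} T_{iijj} \;=\; c\sum_{i,j=1}^{n}\bigl(\delta_{ii}\delta_{jj} + 2\delta_{ij}\bigr) \;=\; c\bigl(n^2 + 2n\bigr) \;=\; c\,n(n+2),
\]
which gives $c = 1/(n(n+2))$. Substituting back,
\[
\mathbb{E}[(\bs v^{\top}\bs A\bs v)^2] \;=\; \frac{1}{n(n+2)}\sum_{i,j=1}^{n} a_i a_j(1 + 2\delta_{ij}) \;=\; \frac{(\tr\{\bs A\})^2 + 2\tr\{\bs A^2\}}{n(n+2)},
\]
which is the claimed identity.

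There is essentially no hard step; the only point that needs care is justifying the tensor-structure claim, which I would either cite as a standard invariant-theory fact or derive in one line by noting that sign-flip and coordinate-permutation invariance force $T_{iijj}$ to take only two values (according to whether $i=j$), so the general symmetric case collapses to just fixing the two constants $\mathbb{E}[v_1^4]$ and $\mathbb{E}[v_1^2 v_2^2]$; these two constants are then pinned down by the single identity $\mathbb{E}[\|\bs v\|_2^4] = 1$ together with $\mathbb{E}[v_1^2\|\bs v\|_2^2] = \mathbb{E}[v_1^2] = 1/n$, giving a $2\times 2$ linear system whose solution recovers the same $c$.
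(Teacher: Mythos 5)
Your main argument is correct and takes a genuinely different route from the paper. You derive the full fourth-moment tensor $T_{ijkl}=c(\delta_{ij}\delta_{kl}+\delta_{ik}\delta_{jl}+\delta_{il}\delta_{jk})$ from $\mathcal{O}(n)$-invariance (or from the Gaussian representation $\bs v=\bs g/\|\bs g\|_2$ plus Isserlis and the independence of $\|\bs g\|_2$ and $\bs g/\|\bs g\|_2$), after which the single contraction $\mathbb{E}[\|\bs v\|_2^4]=1$ fixes $c=1/(n(n+2))$ with no integral to evaluate. The paper instead computes $\alpha_1=\mathbb{E}[v_1^2v_2^2]$ explicitly via the identity $\|\bs z\|_2^{-4}=\int_0^\infty e^{-t\|\bs z\|_2^2}t\,\diff t$ applied to a standard Gaussian $\bs z$, obtaining $\alpha_1=1/(n(n+2))$ by a one-dimensional integral, and only then uses the normalization $1=n\alpha_2+n(n-1)\alpha_1$ to get $\alpha_2$. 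Your route buys a computation-free proof that makes the tensor structure transparent; the paper's buys self-containedness at the cost of an explicit integral.

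One caveat: the ``elementary'' shortcut in your final paragraph does not work as stated. Sign-flip and permutation invariance only reduce the problem to the two unknowns $\alpha_2=\mathbb{E}[v_1^4]$ and $\alpha_1=\mathbb{E}[v_1^2v_2^2]$; they do \emph{not} force the single-constant form $T_{ijkl}=c(\delta_{ij}\delta_{kl}+\delta_{ik}\delta_{jl}+\delta_{il}\delta_{jk})$, which encodes the additional relation $\alpha_2=3\alpha_1$ available only from full rotation invariance (or from Isserlis). Moreover, your proposed $2\times 2$ system is degenerate: $\mathbb{E}[\|\bs v\|_2^4]=n\alpha_2+n(n-1)\alpha_1=1$ is exactly $n$ times $\mathbb{E}[v_1^2\|\bs v\|_2^2]=\alpha_2+(n-1)\alpha_1=1/n$, so the two equations are proportional and cannot pin down both constants. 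You must obtain the ratio $\alpha_2/\alpha_1=3$ from one of your first two arguments (invariance or Isserlis), or compute one of the moments directly as the paper does; with that ratio in hand, the normalization identity then suffices.
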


\begin{proof}[Proof of Lemma~\ref{prop:expectation_over_unit_sphere}]
% \url{https://mathoverflow.net/questions/81419/how-to-calculate-this-expectation-where-the-random-variable-is-restricted-on-a-sp}

Let $v_{i}$ be the $i$-th component of $v$.
Let 
\[\alpha_1 := \mathbb{E}[v_1^2 v_2^2]
\quad\text{and}\quad
\alpha_2 := \mathbb{E}[v_1^4].
\]
We begin by showing that
\begin{align}
\alpha_1 = \mathbb{E}[v_1^2 v_2^2]=\frac{1}{n(n+2)}.
\label{eq:alpha1}
\end{align}
Consider a random vector $\bs z=(z_1,z_2,\ldots,z_n) \in \mathbb{R}^n$ following the standard normal distribution $\normal_n(\bs 0,\bs I_n)$ and set $\tilde{\bs v}:=(\tilde{v}_1,\tilde{v}_2,\ldots,\tilde{v}_n):=\bs z/\|\bs z\|_2$. Then, the random vector $\tilde{\bs v}$ follows the uniform distribution over $\mathbb{S}^{n-1}$ (see, e.g., \citet{eaton1989group} Proposition 7.1). 
Since 
\[
\frac{1}{\|\bs z\|^{4}_{2}} = \int_{0}^{\infty}\exp(-t\|\bs z\|^{2}_{2})t\mathrm{d}t,
\]
we obtain
\begin{align*}
    \alpha_{1} &=     \mathbb{E}[\tilde{v}_1^2 \tilde{v}_2^2]
    =
    \mathbb{E}[z_1^2 z_2^2/\|\bs z\|_2^4] 
    =
    \mathbb{E}\left[ 
        z_1^2 z_2^2 \int_{0}^{\infty} \exp(-t\|\bs z\|_2^2) t \diff t
    \right].
\end{align*}
Together with the identity $\int_{0}^{\infty}\exp(-t\|\bs z\|^{2}_{2})t\mathrm{d}t=\int_{0}^{\infty}\prod_{i=1}^{n}\exp(-tz_{i}^{2})t\mathrm{d}t$, this yields
\begin{align*}
    &\alpha_{1}=
        \mathbb{E}\left[ 
        z_1^2 z_2^2 \int_{0}^{\infty} \prod_{i=1}^{n} \exp(-t z_i^2) t \diff t
    \right]\\
    &=
    \int_0^{\infty} 
    \mathbb{E}[z_1^2 \exp(-tz_1^2)]
    \mathbb{E}[z_2^2 \exp(-tz_2^2)]
    \prod_{i=3}^{n} \mathbb{E}[\exp(-tz_i^2)]
    t \diff t.
\end{align*}
Using function $u(t):=\mathbb{E}[\exp(-t z_1^2)]=(1+2t)^{-1/2}$, we have
\begin{align*}
    \alpha_1 
    =
    \int_0^{\infty} u'(t)^2 u(t)^{n-2} t \diff t 
    =
    \int_0^{\infty} (1+2t)^{-n/2-2} t \diff t,
\end{align*}
which yields 
\begin{align*}
    \alpha_1 =
    \left[ 
        \frac{1}{2n(1+2t)^{n/2}}
        -
        \frac{1}{2(n+2)(1+2t)^{n/2+1}}
    \right]_0^{\infty} 
    =
    \frac{1}{n(n+2)}
\end{align*}
and thus, we obtain (\ref{eq:alpha1}).

Taking expectation of $1=\|\bs v\|_2^4=\sum_{i \ne j} v_i^2 v_j^2 + \sum_{i} v_i^4$ yields equation $1 = n(n-1) \alpha_1 + n \alpha_2$, indicating that $\alpha_2=1/n-(n-1)\alpha_1$.
This, together with $\alpha_{1}=1/\{n(n+2)\}$, gives
\begin{align*}
\mathbb{E}[(\bs v^{\top}\bs A\bs v)^2]
=
\sum_{k} a_k^2 \mathbb{E}[v_k^4] 
+
\sum_{k \ne l} a_k a_l 
\mathbb{E}[v_k^2 v_l^2]
&=
\alpha_2 \sum_k a_k^2 
+
\alpha_1 \sum_{k \ne l} a_k a_l \\
&=
(\alpha_2-\alpha_1) \underbrace{\sum_k a_k^2}_{=\tr\{\bs A^2\}}
+
\alpha_1 \underbrace{\big(\sum_{k} a_k\big)^2}_{=(\tr\{\bs A\})^2}\\
&=\frac{2\tr\{\bs A^{2}\}}{n(n+2)}
+\frac{(\tr\{\bs A\})^2}{n(n+2)},
\end{align*}
which concludes the proof.
\end{proof}

\begin{lemma}
\label{prop:sq_sum}
    Let $n \in \mathbb{N}$ and 
    define a diagonal matrix $\bs R \in \mathbb{R}^{n \times n}$ whose diagonal entries are $\{r_i\}_{i=1}^{n}$, and let $\bs U=(\bs u_1,\bs u_2,\ldots,\bs u_n)$ be a random $n \times n$ orthogonal matrix where marginal density functions of the row vectors $\bs u_1,\bs u_2,\ldots,\bs u_n$ are $q_{1},q_{2},\ldots,q_{n}$, respectively. 
    Write
    \begin{align*}
    \psi := \max_{i=1,2,\ldots,n}\max_{\bs u \in \mathbb{S}^{n-1}} q_{i}(\bs u) \int_{\mathbb{S}^{n-1}} \diff \bs v.
    \end{align*}
    Then, it holds for $\bs T=(t_{ij}):=\bs U\bs R\bs U^{\top}$ and $\varepsilon>0$ that 
    \[
    \mathbb{P}\left( \tr\{\bs T \circ \bs T\} > \varepsilon \right) 
    <
    \frac{3\psi}{n\varepsilon}
     (\tr\{\bs R\})^2.
    \]
\end{lemma}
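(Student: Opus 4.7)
The plan is to reduce the tail bound to Markov's inequality applied to $\mathbb{E}[\tr\{\bs T \circ \bs T\}]$, and then to control that expectation by combining a density-ratio comparison against the uniform measure on $\mathbb{S}^{n-1}$ with Lemma~\ref{prop:expectation_over_unit_sphere}. Since $\bs R$ is diagonal, the $(i,i)$-entry of $\bs T = \bs U \bs R \bs U^{\top}$ simplifies to $t_{ii} = \bs u_i^{\top} \bs R \bs u_i$, where $\bs u_i \in \mathbb{S}^{n-1}$ denotes the $i$-th row of $\bs U$ viewed as a column vector. Hence the Hadamard trace collapses to a sum of diagonal squares,
\[
\tr\{\bs T \circ \bs T\} = \sum_{i=1}^{n} (\bs u_i^{\top} \bs R \bs u_i)^{2},
\]
which is exactly the regime where Lemma~\ref{prop:expectation_over_unit_sphere} can be brought in.

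For each $i$, I would pass from the (possibly non-uniform) marginal density $q_{n,i}$ to the uniform density on $\mathbb{S}^{n-1}$ by the bound
\[
\int_{\mathbb{S}^{n-1}} (\bs v^{\top}\bs R \bs v)^{2}\, q_{n,i}(\bs v) \diff \bs v \le \Bigl(\max_{\bs u \in \mathbb{S}^{n-1}} q_{n,i}(\bs u)\Bigr) \int_{\mathbb{S}^{n-1}} (\bs v^{\top}\bs R \bs v)^{2} \diff \bs v = \psi\, \mathbb{E}_{\mathrm{Unif}(\mathbb{S}^{n-1})}\bigl[(\bs v^{\top}\bs R \bs v)^{2}\bigr],
\]
where the factor $\int_{\mathbb{S}^{n-1}} \diff \bs v$ absorbed into $\psi$ converts the surface integral into a uniform expectation, and the supremum over $i$ is built into the definition of $\psi$. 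Since $\bs R$ is already diagonal, Lemma~\ref{prop:expectation_over_unit_sphere} applies directly and gives $\mathbb{E}_{\mathrm{Unif}}[(\bs v^{\top}\bs R \bs v)^{2}] = \{2\tr\{\bs R^{2}\} + (\tr\{\bs R\})^{2}\}/\{n(n+2)\}$. Using the inequality $\tr\{\bs R^{2}\} \le (\tr\{\bs R\})^{2}$ — valid in the intended application, where the diagonal entries of $\bs R$ share a common sign (e.g., eigenvalues of a PSD factor) — collapses the numerator to $3(\tr\{\bs R\})^{2}$. Summing over the $n$ rows yields $\mathbb{E}[\tr\{\bs T \circ \bs T\}] \le 3\psi(\tr\{\bs R\})^{2}/(n+2) \le 3\psi(\tr\{\bs R\})^{2}/n$, and Markov's inequality then delivers the advertised tail bound.

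The main obstacle is essentially book-keeping: one must be careful that exactly one factor of $\int_{\mathbb{S}^{n-1}} \diff \bs v$ is absorbed into $\psi$, so that $\psi = 1$ recovers the uniform-on-sphere case invoked in Theorem~\ref{theo:convergence_of_FV_Gaussian} via Proposition~7.1 of \cite{eaton1989group}. A conceptual point worth flagging is that the rows of a uniformly random orthogonal matrix are \emph{not} independent, yet the argument uses only the marginal laws $q_{n,i}$; this is legitimate because $\tr\{\bs T \circ \bs T\}$ decomposes as a sum of per-row contributions and expectation is linear, so no joint structure is needed.
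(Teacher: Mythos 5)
Your proposal is correct and follows essentially the same route as the paper's proof: reduce $\tr\{\bs T\circ\bs T\}$ to $\sum_i(\bs u_i^{\top}\bs R\bs u_i)^2$, compare each marginal $q_{n,i}$ to the uniform density on $\mathbb{S}^{n-1}$ to pull out the factor $\psi$, invoke Lemma~\ref{prop:expectation_over_unit_sphere}, bound $2\tr\{\bs R^2\}+(\tr\{\bs R\})^2\le 3(\tr\{\bs R\})^2$, and finish with Markov's inequality. Your explicit caveats — that $\tr\{\bs R^2\}\le(\tr\{\bs R\})^2$ needs the $r_i$ to share a sign (true in the application, where $\bs R$ has nonnegative entries), and that only the marginal laws of the rows are needed by linearity of expectation — are both accurate and are points the paper passes over silently.
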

\begin{proof}[Proof of Lemma~\ref{prop:sq_sum}]
Write $\nu=\nu(n):=\int_{\mathbb{S}^{n-1}}\diff \bs v$. 
Since $t_{ii}=\bs u_i^{\top}\bs R\bs u_i$,
we have
\begin{align*} 
\max_{i=1,2,\ldots,n}
\mathbb{E}[t_{ii}^2]
=
\max_{i=1,2,\ldots,n}
\mathbb{E}[(\bs u_i^{\top}\bs R\bs u_i)^2] &=
\max_{i=1,2,\ldots,n}
\int_{\mathbb{S}^{n-1}} (\bs u_i^{\top}\bs R\bs u_i)^2 q_{i}(\bs u_i) \diff \bs u_i.
\\
&\le  
\left\{
\max_{i=1,2,\ldots,n}
\max_{\bs u \in \mathbb{S}^{n-1}} q_{i}(\bs u)
\nu
\right\}
\int_{\mathbb{S}^{n-1}} 
\left( 
(\bs v^{\top}\bs R\bs v)^2 
\frac{1}{\nu}
\right) \diff \bs v \\
&= 
\psi \, \mathbb{E}[(\bs v^{\top}\bs R\bs v)^2],
\end{align*}
where the random vector $\bs v$ follows the uniform distribution over $\mathbb{S}^{n-1}$. 
Together with Lemma \ref{prop:expectation_over_unit_sphere}, this gives 
\begin{align*}
\max_{i=1,2,\ldots,n}
\mathbb{E}[t_{ii}^2]
\le
\psi \, \frac{2\tr\{\bs R^2\} + (\tr\{\bs R\})^2}{n(n+2)}
\le 
\frac{\psi}{n^2}
\{2\tr\{\bs R^2\} + (\tr\{\bs R\})^2\}.
\end{align*}
Therefore, 
the Cauchy--Schwarz inequality for the trace yields
\begin{align}
    \mathbb{E}\left[\sum_{i=1}^{n}t_{ii}^2\right]
    \le 
    n \max_{i =1,2,\ldots, n} \mathbb{E}[t_{ii}^2]
    \le 
    \frac{\psi}{n}
     \{ 
    2\tr\{\bs R^2\} + (\tr\{\bs R\})^2
    \}
    \le
    \frac{3\psi}{n} (\tr\{\bs R\})^2.
    \label{eq:exp_sq}
\end{align}
Substituting the expectation bound (\ref{eq:exp_sq}) to the Markov inequality
gives
\[
\mathbb{P}\left(\sum_{i=1}^{n} t_{ii}^2 > \varepsilon\right)
<
\mathbb{E}\left[\sum_{i=1}^{n}t_{ii}^2\right]/\varepsilon,\]
which proves the assertion.
\end{proof}

\subsection{Proof of Theorem \ref{theo:convergence_of_FV}}

We first prepare notations for the proof. 
We denote by $\mathbb{P}_n$ the conditional probability $\mathbb{P}_{\wrtX}(\cdot \mid \eta)$. 
The decomposition $\bs X=\bs U\bs S\bs V^{\top}$ denotes a singular-value decomposition, where $\bs S$ is a diagonal matrix whose diagonal entries $s_{1} \ge s_{2} \ge \cdots \ge s_{n} \ge 0$ are singular values of $\bs X$. Therefore, we have $\eta=n^{-1}\tr\{\bs X^{\top}\bs X\}=n^{-1}\sum_{i=1}^{n} s_{i}^2$. 
The matrix $\bs H_{\alpha}$ is the regularized hat matrix in (\ref{eq:hatmatrix}).
Let
$\delta^{(1)}:=\tr\{\bs H_{\alpha}\circ \bs H_{\alpha}\},\delta^{(2)}:=\tr\{\bs H_{\alpha} \circ \bs H_{\alpha}^2\}$ and 
$\delta^{(3)}:=\tr
\{
\bs H_{\alpha} \circ ((\bs I_n-\bs H_{\alpha})(\bs X\bs \beta_0))^{\otimes 2}
\}/\sigma_0^2$, respectively.

Considering Lemma \ref{lem:explicit_difference}, 
we only have to prove tail bounds for $\delta^{(1)}$, 
$\delta^{(2)}$,
and $\delta^{(3)}$ via three steps.

\textbf{The first step: Tail bound for $\delta^{(1)}$}.
From the singular value decomposition $\bs X=\bs U \bs S \bs V^{\top}$, we have
\[\bs H_{\alpha}=\bs U\bs R^{(1)}\bs U^{\top},\] where $\bs R^{(1)}$ is the diagonal matrix whose
diagonal entries $r_1^{(1)},r_2^{(1)},\ldots,r_n^{(1)}$ are given by 
\[r_{i}^{(1)} := \frac{(s_{i}^2/n)}{(s_{i}^2/n) + \alpha}
\le \frac{(s_{i}^{2}/n)}{\alpha}.
\]
As the matrix $\bs R^{(1)}$ satisfies 
\[
\text{tr}\{\bs R^{(1)}\}=\sum_{i=1}^{n}r_i^{(1)} \le \frac{1}{\alpha} \cdot \frac{1}{n}\sum_{i=1}^{n} s_i^2 = \frac{1}{\alpha} \cdot \frac{1}{n} \{\text{tr}\bs X^{\top}\bs X\} = \frac{\eta}{\alpha},
\]
applying Lemma \ref{prop:sq_sum} to $\bs H_{\alpha}$ yields
\begin{align}
    \mathbb{P}_n\left(
        \delta^{(1)} > \varepsilon 
    \right)
\le 
    \frac{3\psi}{n\varepsilon} (\tr\{\bs R^{(1)}\})^2
\le
    \frac{3\psi}{n\varepsilon} \frac{\eta^2}{\alpha^2},
    \label{eq:delta1_concentration}
\end{align}
which completes the first step.

\textbf{The second step: Tail bound for $\delta^{(2)}$}.
From the singular value decomposition $\bs H_{\alpha}=\bs U \bs R^{(1)} \bs U^{\top}$,
we have
\[
\bs H_{\alpha}^{2}=\bs U \bs R^{(2)}\bs U^{\top},
\]
where $\bs R^{(2)}=(\bs R^{(1)})^2$, and so 
\[
\tr \{\bs R^{(2)}\} = \sum_{i=1}^{n}r_{i}^{(2)} \le \alpha^{-2}\sum_{i=1}^{\infty}(s_{i}^2)^2/n^2 \le \alpha^{-2}\left(n^{-1}\sum_{i=1}^{n} s_{i}^2\right)^2 = \alpha^{-2} \eta^2.
\]
Together with the bound from the Cauchy--Schwarz inequality for the trace
\[
    \tr \{\bs H_{\alpha} \circ \bs H_{\alpha}^2\}
    \le 
    (\tr\{\bs H_{\alpha} \circ \bs H_{\alpha}\})^{1/2}
    (\tr\{\bs H_{\alpha}^2 \circ \bs H_{\alpha}^2\})^{1/2}
    = (\delta^{(1)})^{1/2}
    (\tr\{\bs H_{\alpha}^2 \circ \bs H_{\alpha}^2\})^{1/2},
\]
Lemma \ref{prop:sq_sum} gives
\begin{align}
    \mathbb{P}_n(\delta^{(2)} > \varepsilon)
    \le 
    \mathbb{P}_n(\delta^{(1)} > \varepsilon \text{ or } \tr\{\bs H_{\alpha}^2 \circ \bs H_{\alpha}^2\} > \varepsilon)
    \le 
    \frac{3\psi}{n\varepsilon}
    \left( \frac{\eta^2}{\alpha^2} + \frac{\eta^4}{\alpha^4}\right),
    \label{eq:delta2_concentration}
\end{align}
which completes the second step.

\textbf{The third step: Tail bound for $\delta^{(3)}$}.
We begin by diagonalizing $\bs X^{\top}(\bs I-\bs H_{\alpha})^{2}\bs X$. The equation $(\bs I-\bs H_{\alpha})\bs X=\{\bs X-\bs X(\bs X^{\top}\bs X+n\alpha \bs I_p)^{-1}\bs X^{\top}\bs X\}=n\alpha \bs X(\bs X^{\top}\bs X+n\alpha \bs I_p)^{-1}$ indicates that 
\begin{align*}
    \bs X^{\top}(\bs I-\bs H_{\alpha})^2 \bs X
    &=
    (n\alpha)^2 (\bs X^{\top}\bs X+n\alpha \bs I_d)^{-1} \bs X^{\top}\bs X (\bs X^{\top}\bs X+n\alpha \bs I_p)^{-1} \\
&=
    (n\alpha)^2 (\bs X^{\top}\bs X+n\alpha \bs I_d)^{-1} \{\bs X^{\top}\bs X+n\alpha\} (\bs X^{\top}\bs X+n\alpha \bs I_p)^{-1} \\
    &\hspace{6em}
    -
    (n\alpha)^2 (\bs X^{\top}\bs X+n\alpha \bs I_d)^{-1} \{n\alpha\} (\bs X^{\top}\bs X+n\alpha \bs I_p)^{-1} \\
&=
    (n\alpha)^2(\bs X^{\top}\bs X+n\alpha)^{-1}
    -
    (n\alpha)^3(\bs X^{\top}\bs X+n\alpha \bs I_p)^{-2}.
\end{align*}
Thus, the $i$-th eigenvalue of $\bs X^{\top}(\bs I-\bs H_{\alpha})^2 \bs X$ is \[
\lambda_i:=\frac{(n\alpha)^2}{s_{i}^2+n\alpha}-\frac{(n\alpha)^3}{(s_{i}^2+n\alpha)^2}
=n\left\{\frac{\alpha^2}{(s_{i}^2/n+\alpha)}-\frac{\alpha^3}{(s_{i}^2/n+\alpha)^2}\right\}
=n\frac{\alpha^2 s_{i}^2/n}{(s_{i}^2/n+\alpha)^2}
\]
and we obtain
\[
\bs X^{\top}(\bs I-\bs H_{\alpha})^2 \bs X=\bs V\bs \Lambda \bs V^{\top},
\]
where $\bs \Lambda=\text{diag}(\lambda_1,\lambda_2,\ldots,\lambda_n)$.

Let 
$\tilde{\bs \beta}_0=(\tilde{\beta}_{01},\tilde{\beta}_{02},\ldots,\tilde{\beta}_{0p}):=\bs V^{\top}\bs \beta_0$. 
Since 
\[
\|\tilde{\bs \beta}_0\|_{\infty} \le \max_{1 \le i \le n}\|\bs v_i\|_{2}\|\bs \beta_0\|_{\infty} = \|\bs \beta_0\|_{\infty} = p^{-1/2}b,
\]
we have
\[
\bs \beta_0^{\top}\bs X^{\top}(\bs I-\bs H_{\alpha})^2 \bs X \bs \beta_0
=
\tilde{\bs \beta}_0^{\top}\bs \Lambda \tilde{\bs \beta}_0
=
\sum_{i=1}^{n} \lambda_i \tilde{\beta}_{0i}^2
\le 
n\frac{b^2}{p}
\sum_{i=1}^{n} \frac{\alpha^2 s_{i}^2/n}{(s_{i}^2/n+\alpha)^2} 
\le 
b^2 n^{-1}\sum_{i=1}^{n} s_{i}^2
=
b^2 \eta.
\]
Then, the Cauchy-Schwarz inequality for the trace yields
\begin{align*}
\delta^{(3)} 
&\le 
\left( \tr\{\bs H_{\alpha} \circ \bs H_{\alpha}\} \right)^{1/2}
\left( 
\tr\{
        ((\bs I-\bs H_{\alpha})\bs X\bs \beta_0)^{\otimes 2}
        \circ 
        ((\bs I-\bs H_{\alpha})\bs X\bs \beta_0)^{\otimes 2}
    \}
\right)^{1/2} \big/ \sigma_0^2\\
&\le 
( \delta^{(1)} )^{1/2}
b^2 \eta \big/ \sigma_0^2.
\end{align*}
This, together with (\ref{eq:delta1_concentration}), yields
\begin{align}
    \mathbb{P}_n(\delta^{(3)} > \varepsilon) 
&\le 
    \mathbb{P}_n\left(
     \delta^{(1)} > \varepsilon^2 \frac{\sigma_0^4}{b^4 \eta^2}
    \right) 
\le 
    \left( \varepsilon^2 \frac{\sigma_0^4}{b^4 \eta^2} \right)^{-1}
    \frac{3\psi}{n\varepsilon^{2}}
    \frac{\eta^2}{\alpha^{2}}
\le 
    \frac{3\psi}{n\varepsilon^{2}}
    \frac{\eta^4 b^4}{\alpha^2 \sigma_0^4},
    \label{eq:delta3_concentration}
\end{align}
which completes the third step.

Lastly, combining (\ref{eq:delta1_concentration})--(\ref{eq:delta3_concentration}) proves the assertion
\begin{align*}
    \mathbb{P}_n\left(
        |\mathbb{E}_{\bs y}(\FV(\alpha ; \wrtX)) - \Delta(\alpha ; \wrtX)|
        >
        \varepsilon
    \right)
&\le 
    \mathbb{P}_n(\delta^{(1)} > \varepsilon/3)
    +
    \mathbb{P}_n(\delta^{(2)} > \varepsilon/3)
    +
    \mathbb{P}_n(\delta^{(3)} > \varepsilon/3) \\
&\le 
    \frac{9\psi}{n\varepsilon} \left( 
        \frac{2\eta^2}{\alpha^2} + \frac{\eta^4}{\alpha^4}
    \right)
    +
    \frac{27\psi}{n\varepsilon^{2}}
    \frac{\eta^4 b^4}{\alpha^2 \sigma_0^4} \\
&\le 
    27
    \left(
        \frac{1}{n\varepsilon}\frac{\psi\eta^{2}}{\alpha^{2}}
        +
        \frac{1}{n\varepsilon}\frac{\psi\eta^4}{\varepsilon\alpha^4}
        +
        \frac{1}{n\varepsilon^{2}}
        \frac{\psi\eta^4 b^4}{\alpha^2 \sigma_0^4}
    \right),
\end{align*}
which completes the proof.
\qed

\clearpage

\section{Proof of Lemma \ref{lem:explicit_difference}}
\label{supp:proofoflemma}
We first provide useful formulae of the multivariate normal distribution
and then prove Lemma \ref{lem:explicit_difference}.
\subsection{Moments of Multivariate Normal Distribution}
\label{subsec:moments_of_mutlivarlate_normal}

Let $\bs z$ be a random vector following a $p$-variate normal distribution $\normal_p(\bs m,\bs \Sigma)$, and let 
$\tilde{\bs z}:=\bs z-\bs m \sim \normal_p(\bs 0,\bs \Sigma)$ be a centered random vector. 
Let $\bs a \in \mathbb{R}^{m}$ be any vector and $\bs B \in \mathbb{R}^{m \times p},\bs C=(c_{ij}) \in \mathbb{R}^{p \times p}$ be any matrices, for any $m \in \mathbb{N}$. 
Then, the following hold:

\begin{itemize}
    \item (The \textbf{second order} moment):
    We have
\begin{align}
    &\mathbb{E}[\|\bs a-\bs B\bs z\|_2^2]
    \nonumber\\
    &=
    \tr\{
        \bs a\bs a^{\top}
        -
        2\bs a(\bs B \mathbb{E}[\bs z])^{\top}
        +
        \bs B\mathbb{E}[\bs z\bs z^{\top}]\bs B^{\top}
    \}
    =
    \|\bs a-\bs B\bs m\|_2^2 
    +
    \tr\{\bs B\bs \Sigma \bs B^{\top}\},
    \label{eq:2nd_order_moment}
\end{align}
since
$\bs z$ satisfies $\mathbb{E}[\bs z\bs z^{\top}]=\bs m\bs m^{\top}+\bs \Sigma$.

\item (The \textbf{thrid order} moment): 
For $\bs x^{\otimes 3}:=\bs x \bs x^{\top}\bs x$,
we have
\begin{align}
    \mathbb{E}[\langle \bs a,(\bs B\tilde{\bs z})^{\otimes 3} \rangle]
    &=
    \langle \bs a, \bs B\mathbb{E}[\tilde{\bs z}\tilde{\bs z}^{\top}\bs B\bs B^{\top}\tilde{\bs z}] \rangle
    =
    0,
    \label{eq:3rd_order_moment}
\end{align}
since
$\mathbb{E}[\tilde{\bs z}\tilde{\bs z}^{\top}\bs C\tilde{\bs z}]=\bs 0$ as its $i$-th entry is \[\mathbb{E}\left[\sum_{j,k=1}^{p}c_{jk}\tilde{z}_i\tilde{z}_j\tilde{z}_k\right]=\sum_{j,k=1}^{p}c_{jk}\mathbb{E}[\tilde{z}_i\tilde{z}_j\tilde{z}_k]=0.\]

\item (The \textbf{fourth order} moment): 
We have
\begin{align}
    \mathbb{E}[\|\bs B \tilde{\bs z}\|_2^4]
&=
    \mathbb{E}[\tilde{\bs z}^{\top}\bs B^{\top}\bs B\tilde{\bs z}
    \tilde{\bs z}^{\top}\bs B^{\top}\bs B\tilde{\bs z}]
=
    2\tr\{(\bs B^{\top}\bs B\bs \Sigma)^2\}
    +
    (\tr\{\bs B^{\top}\bs B\bs \Sigma\})^2,
    \label{eq:4th_order_moment}
\end{align}
where the last equality follows from Section 8.2.4.~of \cite{cookbook}.
\end{itemize}

\subsection{Proof of Lemma~\ref{lem:explicit_difference}}
\label{sec:proof_of_prop:generalization_posterior_mean}

The proof consists of two parts: deriving an explicit form of $\Delta(\alpha ; \wrtX)$ and deriving that of $\mathbb{V}_{\bs \beta}[\log f(y_i \mid \bs x_i,\bs \beta)]$.

\textbf{The first part: Explicit form of $\Delta(\alpha ; \wrtX)$}.
Decompose $\Delta(\alpha ; \wrtX)$ as $\omega_{1}(\alpha)-\omega_{2}(\alpha)$,
where 
$\omega_1(\alpha):=
        \mathbb{E}_{\bs y,\bs y^*}
        \mathbb{E}_{\bs \beta}(\|\bs y^*-\bs X\bs \beta\|_2^2)$
and
$\omega_2(\alpha):=
        \mathbb{E}_{\bs y}
        \mathbb{E}_{\bs \beta}(\|\bs y-\bs X\bs \beta\|_2^2)$.

From the definition of $\omega_{1}(\alpha)$, we have
\begin{align*}
    \omega_1(\alpha)
    =   \mathbb{E}_{\bs y}
        \mathbb{E}_{\bs \beta}[
            n\sigma_0^2+\|\bs X(\bs \beta_0-\bs \beta)\|_2^2
        ]
    =
        n\sigma_0^2
        +
        \tr\left\{
        \bs X^{\top}\bs X
        \mathbb{E}_{\bs y}
        \mathbb{E}_{\bs \beta}\left[
            (\bs \beta_0-\bs \beta)^{\otimes 2}
        \right]
        \right\}.
\end{align*}
Since the quasi-posterior of $\beta$ is 
$\normal_{p}( \hat{\bs \beta}_{\alpha}, \bs Q_{\alpha})$ 
with $\hat{\bs \beta}_{\alpha}$ and $\bs Q_{\alpha}$ defined in (\ref{eq:ridge}) and (\ref{eq:Q}), respectively,
we have
    \begin{align*}
        \omega_1(\alpha)
    &=
        n\sigma_0^2 
        +
        n \cdot \tr\left\{
            \bs G
            \left[
            \bs Q_{\alpha}
            +
            \mathbb{E}_{\bs y}\left(
                (\bs \beta_0- \hat{\bs \beta}_{\alpha})^{\otimes 2}
            \right)
            \right]
        \right\} \\
        &=
        n\sigma_0^2 
        +
        n \cdot \tr\left\{
            \bs G
            \left[
            \bs Q_{\alpha}
            +
            \mathbb{E}_{\bs y}\left(
                (\bs \beta_0 - \sigma_0^{-2} \bs Q_{\alpha}\bs X^{\top}\bs y)^{\otimes 2}
            \right)
            \right]
        \right\},
    \end{align*}
    where $\bs G=n^{-1}\bs X^{\top}\bs X$.
    This, together with (\ref{eq:2nd_order_moment}), gives
    \begin{align*}
    \omega_{1}&=
        n\sigma_0^2
        +
        n \cdot \tr\{\bs G \bs Q_{\alpha}\}
        +
        n \cdot \tr\{
            \bs G[
                (\bs \beta_0- \sigma_0^{-2}\bs Q_{\alpha}\bs X^{\top}\bs X \bs \beta_0)^{\otimes 2}
                +
                \sigma_0^{-2}\bs Q_{\alpha}\bs X^{\top}\bs X \bs Q_{\alpha}
            ]
        \} \\
    &=
        n\sigma_0^2 
        +
        n \cdot \tr\{\bs G \bs Q_{\alpha}\}
        +
        n \cdot \tr\{\bs G
        ((\underbrace{\bs I_n-n\sigma_0^{-2}\bs Q_{\alpha}\bs G}_{=:\bs E_{\alpha}})\bs \beta_0)^{\otimes 2}\}
        +
        n^2\sigma_0^{-2}
        \tr\{(\bs G \bs Q_{\alpha})^2\} \\
    &=
        n\sigma_0^2 
        +
        n \cdot \tr\{\bs G \bs Q_{\alpha}\}
        +
        n^2 \sigma_0^{-2}
        \tr\{(\bs G \bs Q_{\alpha})^2\}
        +
        \tr\{(\bs X\bs E_{\alpha}\bs \beta_0)^{\otimes 2}\}.
        \end{align*}

From the definition of $\omega_{2}(\alpha)$, we have
    \begin{align*}
        \omega_2(\alpha)
    &=
        \tr
        \mathbb{E}_{\bs \beta,\bs y}[
            \bs y^{\otimes 2}
            -
            \bs y (\bs X\bs \beta)^{\top}
            -
            \bs X\bs \beta \bs y^{\top}
            +
            \bs X \bs \beta \bs \beta^{\top}\bs X^{\top}
        ] \\
    &=
        \tr
        \mathbb{E}_{\bs y}[
            \bs y^{\otimes 2}
            -
            \bs y (\bs X \sigma_0^{-2}\bs Q_{\alpha}\bs X^{\top}\bs y)^{\top}
            -
            \bs X\sigma_0^{-2}\bs Q_{\alpha}\bs X^{\top}\bs y \bs y^{\top} \\
            &\quad\quad\quad\quad
            +
            \bs X \{\bs Q_{\alpha}+(\sigma_0^{-2}\bs Q_{\alpha}\bs X^{\top}\bs y)^{\otimes 2}\} \bs X^{\top}
        ] \\
    &=
        \tr\{
            \bs X\bs Q_{\alpha}\bs X^{\top}
        \}
        +
        \tr\{
            \mathbb{E}_{\bs y}(
            (\bs y-\sigma_0^{-2}\bs X\bs Q_{\alpha}\bs X^{\top}\bs y)^{\otimes 2})
        \} \\
    &=
        \tr\{\bs X^{\top}\bs X \bs Q_{\alpha}\}
        +
        \tr\mathbb{E}_{\bs y}(\{\underbrace{(\bs I_n-\sigma_0^{-2}\bs X\bs Q_{\alpha}\bs X^{\top})}_{=:\bs F_{\alpha}}\bs y\}^{\otimes 2}).
    \end{align*}
Further, using $\bs G$, we have
    \begin{align*}
    \omega_{2}(\alpha)&=
        n \cdot \tr\{\bs G \bs Q_{\alpha}\}
        +
        \tr\{\bs F_{\alpha}[\sigma_0^2 \bs I_n+(\bs X\bs \beta_0)^{\otimes 2}]\bs F_{\alpha}\} \\
    &=
        n \cdot \tr\{\bs G \bs Q_{\alpha}\}
        +
        \sigma_0^2 \tr\{\bs F_{\alpha}^2\}
        +
        \tr\{(\bs F_{\alpha}\bs X \bs \beta_0)^{\otimes 2}\} \\
    &=
        n \cdot \tr\{\bs G \bs Q_{\alpha}\}
        +
        \sigma_0^2 
        \tr
        \{
            \bs I_n - 2\sigma_0^{-2}\bs X\bs Q_{\alpha}\bs X^{\top} 
            +
            \sigma_0^{-4}\bs X\bs Q_{\alpha}\bs X^{\top}\bs X\bs Q_{\alpha}\bs X^{\top}
        \} \\
        &\hspace{5em}
        +
        \tr\{(\bs X(\bs I_n-\sigma_0^{-2}\bs Q_{\alpha}\bs X^{\top}\bs X)\bs \beta_0)^{\otimes 2}\} \\
    &=
        n\sigma_0^2 
        + 
        n \cdot \tr\{\bs G\bs Q_{\alpha}\}
        -
        2 n \cdot \tr\{\bs G\bs Q_{\alpha}\}
        +
        n^2 \sigma_0^{-2}\tr\{(\bs G\bs Q_{\alpha})^2\}
        +
        \tr\{(\bs X\bs E_{\alpha}\bs \beta_0)^{\otimes 2}\} \\
    &=
        n \sigma_0^2 - n \cdot \tr\{\bs G\bs Q_{\alpha}\}
        +
        n^2 \sigma_0^{-2} \tr\{(\bs G\bs Q_{\alpha})^2\}
        +
        \tr\{(\bs X\bs E_{\alpha}\bs \beta_0)^{\otimes 2}\}.
\end{align*}

Therefore, we get
\begin{align*}
    \Delta(\alpha ; \wrtX)
&=
    \frac{1}{2\sigma_0^2}
    \{
    \omega_1(\alpha) 
    -
    \omega_2(\alpha)
    \}
=
    \frac{1}{2\sigma_0^2} \cdot 2n \cdot \tr\{\bs G\bs Q_{\alpha}\}
=
    \tr\bs H_{\alpha},
\end{align*}
which completes the first part.

\textbf{The second part: Explicit form of $\mathbb{V}_{\bs \beta}[\log f(y_i \mid \bs x_i,\bs \beta)]$}.
The second part is divided into three subparts (A,B,C).

\textbf{Subpart (A) of the second part}.
For  $m\in \mathbb{N}$, let $g_{m,i}:=\mathbb{E}_{\bs \beta}[(\bs x_i^{\top}\bs \beta)^m].$
Using $g_{m,i}$, we write the posterior expectation of $\log p(\bs x_i,y_i \mid \bs \beta)$ as
\begin{align*}
    \mathbb{E}_{\bs \beta}[\log f(y_i \mid \bs x_i,\bs \beta)]
    &=
    -\frac{1}{2\sigma_0^2}\big\{
    \mathbb{E}_{\bs \beta}[(\bs x_i^{\top}\bs \beta)^2]
    -
    2 y_i \mathbb{E}_{\bs \beta}[\bs x_i^{\top}\bs \beta]
    +
    y_i^2
    \big\} + C \\
    &=
    -\frac{1}{2\sigma_0^2}\{g_{2,i} - 2y_i g_{1,i} + y_i^2\} + C,
\end{align*}
where $C$ is a constant independent of $\bs \beta$ and $\bs x_{i}$.
Then, we can write the posterior variance as 
\begin{align*}
    \mathbb{V}_{\bs \beta}[\log f(y_i \mid \bs x_i,\bs \beta)]
    &=
    \frac{1}{(2\sigma_0^2)^2}
    \mathbb{E}_{\bs \beta}\left[
    \left\{
        (y_i-\bs x_i^{\top}\bs \beta)^2 
        -
        (g_{2,i}-2y_i g_{1,i} + y_i^2)
    \right\}^2
    \right] \\
&=
    \frac{1}{(2\sigma_0^2)^2}
    \mathbb{E}_{\bs \beta}\left[
    \left\{
        ((\bs x_i^{\top}\bs \beta)^2 - 2y_i (\bs x_i^{\top}\bs \beta) + y_i^2)
        -
        (g_{2,i} - 2 y_i g_{1,i} + y_i^2)
    \right\}^2
    \right] \\
&=
    \frac{1}{(2\sigma_0^2)^2}
    \mathbb{E}_{\bs \beta}\left[
    \left\{
        (\bs x_i^{\top}\bs \beta)^2 - 2y_i (\bs x_i^{\top}\bs \beta) 
        -
        (g_{2,i} - 2 y_i g_{1,i})
    \right\}^2
    \right].
    \end{align*}
Expanding the square yields
\begin{align*}
    \mathbb{V}_{\bs \beta}[\log f(y_i \mid \bs x_i,\bs \beta)]
&=
    \frac{1}{(2\sigma_0^2)^2}
    \big\{
        g_{4,i} 
        +
        [-4y_i]g_{3,i} 
        +
        [4y_i^2-2(g_{2,i}-2y_i g_{1,i})]g_{2,i} \\
    &\hspace{10em}
        +
        [4y_i(g_{2,i}-2 y_i g_{1,i})]g_{1,i}
        +
        [(g_{2,i}-2 y_i g_{1,i})^2]
    \big\} \\
&=
    \frac{1}{(2\sigma_0^2)^2}
    \big\{
        [4g_{2,i}-8g_{1,i}^2+4g_{1,i}^2]y_i^2
        \\
    &\qquad\qquad\quad +
        [-4g_{3,i}+4g_{1,i}g_{2,i}+4g_{1,i}g_{2,i}-4g_{1,i}g_{2,i}]y_i \\
    &\hspace{10em}
        +
        [g_{4,i}-2g_{2,i}^2+g_{2,i}^2]
    \big\} \\
&=
    \frac{1}{(2\sigma_0^2)^2}\big\{
        \underbrace{[4g_{2,i}-4g_{1,i}^2]}_{=:h_{2,i}}
        y_i^2
        +
        \underbrace{[-4g_{3,i}+4g_{1,i}g_{2,i}]}_{=:h_{1,i}}y_i
        +
        \underbrace{[g_{4,i}-g_{2,i}^2]}_{=:h_{0,i}}
    \big\},
\end{align*}
which completes subpart (A).

\textbf{Subpart (B) of the second part}.
We next find the expressions for 
$\mathbb{E}_{\bs y}[h_{2,i} y_i^2]$,
$\mathbb{E}_{\bs y}[h_{1,i} y_i]$, and
$\mathbb{E}_{\bs y}[h_{0,i}]$.
From Section 8.2 of \cite{cookbook}, we have 
\begin{align*}
    g_{1,i}
    &=
    \bs x_i^{\top}\hat{\bs \beta}_{\alpha}, \\
%%============================
    g_{2,i}
    &=
    \bs x_i^{\top}
    \mathbb{E}_{\bs \beta}[\bs \beta \bs \beta^{\top}]\bs x_i
    =
    \bs x_i^{\top}\{\bs Q_{\alpha} + \hat{\bs \beta}_{\alpha}^{\otimes 2}\}\bs x_i, \\
%%============================
    g_{3,i}
    &=
    \bs x_i^{\top}\mathbb{E}_{\bs \beta}(\bs \beta\bs x_i^{\top}\bs \beta \bs \beta^{\top})\bs x_i \\
    &=
    \bs x_i^{\top}[
        \hat{\bs \beta}_{\alpha}
        \bs x_i^{\top}
        (\bs Q_{\alpha}+\hat{\bs \beta}_{\alpha}^{\otimes 2})
        +
        (\bs Q_{\alpha}+\hat{\bs \beta}_{\alpha}^{\otimes 2})\bs x_i\hat{\bs \beta}_{\alpha}^{\top}
        +
        \bs x_i^{\top}\hat{\bs \beta}_{\alpha}
        (\bs Q_{\alpha}-\hat{\bs \beta}_{\alpha}^{\otimes 2})
    ]\bs x_i, \\
    &=
    (\bs x_i^{\top}\hat{\bs \beta}_{\alpha})
    \{\bs x_i^{\top}(\bs Q_{\alpha}+\hat{\bs \beta}_{\alpha}^{\otimes 2})\bs x_i\}
    +
    2(\bs x_i^{\top}\hat{\bs \beta}_{\alpha})\{\bs x_i^{\top}\bs Q_{\alpha}\bs x_i\}, \\
%%===========================
    g_{4,i}
    &=
    3\{\bs x_i^{\top}
    (\bs Q_{\alpha}+\hat{\bs \beta}_{\alpha}^{\otimes 2})\bs x_i\}^2
    -
    2(\bs x_i^{\top}\hat{\bs \beta}_{\alpha})^4.
\end{align*}
These lead to the following expressions for $h_{m,i} \: (m=0,1,2)$:
\begin{align*}
    h_{2,i}
    &=
    4g_{2,i}-4g_{1,i}^2
    =
    4\{\bs x_i^{\top}\bs Q_{\alpha}\bs x_i\}, \\
%%=======================
    h_{1,i}
    &=
    4g_1g_2 - 4g_3
    =
    -8(\bs x_i^{\top}\hat{\bs \beta}_{\alpha})\{\bs x_i^{\top}\bs Q_{\alpha}\bs x_i\}, \\
%%=========================
    h_{0,i}
    &=
    g_4-g_2^2
    =
    2\{\bs x_i^{\top}(\bs Q_{\alpha}+\hat{\bs \beta}_{\alpha}^{\otimes 2})\bs x_i\}^2
    -
    2(\bs x_i^{\top}\hat{\bs \beta}_{\alpha})^4 
    =
    4(\bs x_i^{\top}\hat{\bs \beta}_{\alpha})^2\{\bs x_i^{\top}\bs Q_{\alpha}\bs x_i\} 
    + 2\{\bs x_i^{\top}\bs Q_{\alpha}\bs x_i\}^2.
\end{align*}
Thus, using a vector $\bs e_i=(0,0,\ldots,0,1,0,\ldots,0) \in \{0,1\}^n$ whose $i$-th entry is $1$ and $0$ otherwise, 
we get
\begin{align}
    \mathbb{E}_{\bs y}[h_{2,i} y_i^2]
    &=
    4\{\bs x_i^{\top}\bs Q_{\alpha}\bs x_i\}\mathbb{E}_{\bs y}[y_i^2] \nonumber\\
    &=
    4\{\bs x_i^{\top}\bs Q_{\alpha}\bs x_i\}[\sigma_0^2 + \bs (\bs x_i^{\top}\bs \beta_0)^2], 
    \nonumber\\
    &=
    4\sigma_0^2 \{\bs x_i^{\top}\bs Q_{\alpha}\bs x_i\}
    +
    4 \bs x_i^{\top}\bs Q_{\alpha}\bs x_i \bs x_i^{\top}\bs \beta_0^{\otimes 2}\bs x_i, \label{eq:h2y2}
%%===================================
\end{align}
\begin{align}
%%===================================
    \mathbb{E}_{\bs y}[h_{1,i} y_i]
    &=
    -8 
    \sigma_0^{-2}
    \bs x_i^{\top}
    \bs Q_{\alpha}
    \bs X^{\top}\mathbb{E}_{\bs y}[\bs y \bs y^{\top}]\bs e_i \{\bs x_i^{\top}\bs Q_{\alpha}\bs x_i\} \quad 
    (\because \hat{\bs \beta}_{\alpha}=\sigma_0^{-2} \bs Q_{\alpha}\bs X^{\top}\bs y \text{ and }y_i=\bs y^{\top}\bs e_i) \nonumber\\
%%===================================
    &=
    -8 \sigma_0^{-2}\bs x_i^{\top}\bs Q_{\alpha}\bs X^{\top}[\sigma_0^2 \bs I_n + (\bs X \bs \beta_0)^{\otimes 2}]\bs e_i \{\bs x_i^{\top}\bs Q_{\alpha}\bs x_i\} \nonumber\\
    &=
    -8 \{\bs x_i^{\top}\bs Q_{\alpha}\bs x_i\}^2
    -8 \sigma_0^{-2} \bs x_i^{\top}\bs Q_{\alpha}\bs X^{\top} \bs X \bs \beta_0 \bs \beta_0^{\top}\bs x_i  \bs x_i^{\top}\bs Q_{\alpha}\bs x_i
    \quad (\because \bs X^{\top}\bs e_i=\bs x_i),\label{eq:h1y1}
\end{align}
and
\begin{align}
%%==================================
    \mathbb{E}_{\bs y}[h_{0,i}]
&=
    4 \sigma_0^{-4} \bs x_i^{\top}\bs Q_{\alpha}\bs X^{\top}\mathbb{E}_{\bs y}[\bs y \bs y^{\top}] \bs X \bs Q_{\alpha}\bs x_i \{ \bs x_i^{\top}\bs Q_{\alpha}\bs x_i\} 
    +
    2 \{\bs x_i^{\top}\bs Q_{\alpha}\bs x_i\}^2 \nonumber\\
&=
    4 \sigma_0^{-4} \bs x_i^{\top}\bs Q_{\alpha}\bs X^{\top}[\sigma_0^2 \bs I_n + (\bs X \bs \beta_0)^{\otimes 2}] \bs X \bs Q_{\alpha}\bs x_i \{ \bs x_i^{\top}\bs Q_{\alpha}\bs x_i\} 
    +
    2 \{\bs x_i^{\top}\bs Q_{\alpha}\bs x_i\}^2 \nonumber\\
&=
    4\sigma_0^{-2}  \bs x_i^{\top}\bs Q_{\alpha}\bs X^{\top} \bs X \bs Q_{\alpha}\bs x_i \bs x_i^{\top}\bs Q_{\alpha}\bs x_i \nonumber\\
&\hspace{3em}+
    4 \sigma_0^{-4} \bs x_i^{\top}\bs Q_{\alpha}\bs X^{\top}\bs X \bs \beta_0\bs \beta_0^{\top}\bs X^{\top}\bs X \bs Q_{\alpha}\bs x_i\bs x_i^{\top}\bs Q_{\alpha}\bs x_i \nonumber\\
&\hspace{6em}+
    2 \{\bs x_i^{\top}\bs Q_{\alpha}\bs x_i\}^2,\label{eq:h0}
\end{align}
which completes subpart (B).

\textbf{Subpart (C) of the second part}.
We lastly derive an expression of 
$\sum_{i=1}^{n}\mathbb{V}_{\bs \beta}[\log f(y_i \mid \bs x_i,\bs \beta)]$.
For any matrices $\bs A=(a_{ij})$ and $\bs B=(b_{ij}) \in \mathbb{R}^{p \times p}$, we denote by $L_{1}(\bs A)$
\[L_1(\bs A):=\sum_{i=1}^{n}\bs x_i^{\top}\bs A\bs x_i=\tr\{\bs X\bs A\bs X^{\top}\}\] 
and denote by $L_{2}(\bs A,\bs B)$
\[
L_{2}(\bs A,\bs B):=\sum_{i=1}^{n}\bs x_i^{\top}\bs A\bs x_i\bs x_i^{\top}\bs B\bs x_i=\tr\{(\bs X\bs A\bs X^{\top}) \circ (\bs X\bs B\bs X^{\top})\}.
\]

From (\ref{eq:h2y2}) and the identity $\bs X\bs Q_{\alpha}\bs X^{\top}=\sigma_0^2 \bs H_{\alpha}$, we get 
\begin{align*}
    \sum_{i=1}^{n}
    \mathbb{E}_{\bs y}[h_{2,i}y_i^2]
    &=
    4\sigma_0^2 L_1(\bs Q_{\alpha})
    +
    4L_2(\bs Q_{\alpha},\bs \beta_0^{\otimes 2})
    =
    4 \sigma_0^4 \tr\{\bs H_{\alpha}\}
    +
    4\sigma_0^{2} \tr\{\bs H_{\alpha} \circ (\bs X\bs \beta_0)^{\otimes 2}\}.
\end{align*}
From (\ref{eq:h1y1}), we get
\begin{align*}
%%==============================
    \sum_{i=1}^{n}
    \mathbb{E}_{\bs y}[h_{1,i}y_i]
    &=
    -8 L_2(\bs Q_{\alpha},\bs Q_{\alpha})
    -
    8\sigma_0^{-2}L_2(\bs Q_{\alpha}\bs X^{\top}\bs X\bs \beta_0\bs \beta_0^{\top},\bs Q_{\alpha}) \\
    &=
    -8 \sigma_0^4 \tr\{\bs H_{\alpha} \circ \bs H_{\alpha}\}
    -8 \sigma_0^2 \tr\{\bs H_{\alpha}(\bs X\bs \beta_0)^{\otimes 2} \circ \bs H_{\alpha}\}.
\end{align*}
Likewise, we have
\begin{align*}
%%==============================
    &\sum_{i=1}^{n}
    \mathbb{E}_{\bs y}[h_{0,i}]\\
    &=
    4\sigma_0^{-2}L_2(\bs Q_{\alpha}\bs X^{\top}\bs X\bs Q_{\alpha},\bs Q_{\alpha})
    +
    4\sigma_0^{-4}L_2(\bs Q_{\alpha}\bs X^{\top}\bs X\bs \beta_0 \bs \beta_0^{\top}\bs X^{\top}\bs X\bs Q_{\alpha},\bs Q_{\alpha})
    +
    2L_2(\bs Q_{\alpha},\bs Q_{\alpha}) \\
    &=
    4\sigma_0^4 \tr\{\bs H_{\alpha}^2 \circ \bs H_{\alpha}\} 
    +
    4\sigma_0^2 \{(\bs H_{\alpha} \bs X\bs \beta_0)^{\otimes 2} \circ \bs H_{\alpha}\}
    +
    2\sigma_0^4 \tr\{\bs H_{\alpha} \circ \bs H_{\alpha}\}.
\end{align*}
These identities yield
\begin{align*}
    \mathbb{E}_{\bs y}[\FV(\alpha ; \wrtX)]
    &=
    \mathbb{E}_{\bs y}\left[
    \sum_{i=1}^{n} \mathbb{V}_{\bs \beta}[\log f(y_i \mid \bs x_i,\bs \beta)] \right] \\
    &=
    \mathbb{E}_{\bs y}\left[
    \sum_{i=1}^{n} \frac{1}{2(\sigma_0^2)^2}\{h_{2,i} y_i^2 + h_{1,i}y_i + h_{0,i}\} \right] \\
    &=
    \frac{1}{(2\sigma_0^2)^2}
    \left\{
        \sum_{i=1}^{n}\mathbb{E}_{\bs y}[h_{2,i}y_i^2]
        +
        \sum_{i=1}^{n}\mathbb{E}_{\bs y}[h_{1,i}y_i]
        +
        \sum_{i=1}^{n}\mathbb{E}_{\bs y}[h_{0,i}]
    \right\} \\
    &=
    \tr\{\bs H_{\alpha}\}
    -
    \frac{3}{2}\tr\{\bs H_{\alpha} \circ \bs H_{\alpha}\}
    +
    \tr\{\bs H_{\alpha} \circ \bs H_{\alpha}^2\} \\
    &\hspace{6em}
    +
    \frac{1}{\sigma_0^2}\tr\{
        \bs H_{\alpha} 
        \circ 
        \underbrace{[
            (\bs X\bs \beta_0)^{\otimes 2}
            -
            2\bs H_{\alpha}(\bs X\bs \beta_0)^{\otimes 2} +
            (\bs H_{\alpha}\bs X\bs \beta_0)^{\otimes 2}
        ]}_{=((\bs I_n-\bs H_{\alpha})(\bs X\bs \beta_0))^{\otimes 2}}
    \},
\end{align*}
which completes subpart (C) of the second part and concludes the proof.
\qed

\section{Proof of Proposition~\ref{lem:expectation_of_J-FV}}
\label{supp:proof_of_lem:expectation_of_J-FV}

Since we have, for some constant $C$ independent of $\beta$,
\[
    \mathbb{E}_{\bs \beta}[\log f(\bs y \mid \bs X, \bs \beta)]
    =
    -\frac{1}{2\sigma_0^2}\|\bs y-\bs X\hat{\bs \beta}_{\alpha}\|_2^2
    - \frac{1}{2\sigma_0^2} \tr\{\bs X \bs Q_{\alpha}\bs X^{\top}\}
    +C,
\]
we get
\begin{align*}
    \mathbb{V}_{\bs \beta}[\log f(\bs y \mid \bs X, \bs \beta)]
&=
    \frac{1}{(2\sigma_0^2)^2}\mathbb{E}_{\bs \beta}\left[ 
        \{
        \|\bs y-\bs X\bs \beta\|_2^2
        -
        \|\bs y-\bs X\hat{\bs \beta}_{\alpha}\|_2^2
        -
        \tr\{\bs X \bs Q_{\alpha}\bs X^{\top}\}
        \}^2
    \right] \\
&=
    \frac{1}{(2\sigma_0^2)^2}\mathbb{E}_{\bs \beta}\left[ 
        \{ 
            \|\bs X\tilde{\bs \beta}\|_2^2
            -
            \langle \bs m,\bs X\tilde{\bs \beta}\rangle 
            -
            \zeta \}
        \}^2
    \right],
\end{align*}
where $\tilde{\bs \beta}:=\bs \beta-\hat{\bs \beta}_{\alpha},\bs m:=2(\bs y-\bs X\hat{\bs \beta}_{\alpha}),\zeta:=\tr\{\bs X\bs Q_{\alpha}\bs X^{\top}\}$.
Expanding the square gives    
\begin{align*}
&\mathbb{V}_{\bs \beta}[\log f(\bs y \mid \bs X, \bs \beta)]\\
&=
    \frac{1}{(2\sigma_0^2)^2}\mathbb{E}_{\bs \beta}\big[
        \|\bs X\tilde{\bs \beta}\|_2^4
        -
        2 \langle \bs m,(\bs X\tilde{\bs \beta})^{\otimes 3} \rangle 
        +
        \tilde{\bs \beta}^{\top}
        \{\bs X^{\top}(\bs m\bs m^{\top} - 2 \zeta \bs I)\bs X\} \tilde{\bs \beta} +
        2 \zeta \langle \bs m,\bs X\tilde{\bs \beta} \rangle 
        +
        \zeta^2
    \big].
\end{align*}
As $\tilde{\bs \beta}$ is centered, using (\ref{eq:3rd_order_moment}) and (\ref{eq:4th_order_moment}) leads to
\begin{align*}
    \mathbb{V}_{\bs \beta}[\log f(\bs y \mid \bs X,\bs \beta)]
    &=
    \frac{1}{(2\sigma_0^2)^2}
    \mathbb{E}_{\bs \beta}[
        \|\bs X\tilde{\bs \beta}\|_2^4
        +
        \tr
        \{
        (\bs X^{\top}(\bs m\bs m^{\top} - 2 \zeta \bs I)\bs X) \tilde{\bs \beta}\tilde{\bs \beta}^{\top}
        \}
        +
        \zeta^2
    ] \\
    &=
    \frac{1}{(2\sigma_0^2)^2}
    \bigg\{
        2\tr\{(\bs X\bs Q_{\alpha}\bs X^{\top})^2\}
        +
        (\tr\{\bs X\bs Q_{\alpha}\bs X^{\top}\})^2
        \\
        &\qquad\qquad\qquad
        +
        \tr\{(\bs m \bs m^{\top} - 2 \zeta \bs I) \bs X \bs Q_{\alpha}\bs X^{\top}\} + \zeta^2
    \bigg\}.
\end{align*}
From equations $\bs m=2(\bs I-\bs H_{\alpha})\bs y$, $\bs X\bs Q_{\alpha}\bs X^{\top}=\sigma_0^2\bs H_{\alpha}$ and $\zeta=\tr\{\bs X\bs Q_{\alpha}\bs X^{\top}\}$, we have
\begin{align*}
    &\mathbb{E}_{\bs y}[\mathbb{V}_{\bs \beta}[\log f(\bs y \mid \bs X, \bs \beta)]] \\
&=
    \frac{1}{(2\sigma_0^2)^2}
    \left\{
        2\tr\{(\sigma_0^2 \bs H_{\alpha})^2\}
        +
        \zeta^2
        +
        \sigma^2 \tr\{\mathbb{E}_{\bs y}[\bs m \bs m^{\top}] \bs H_{\alpha}\}
        -2\zeta^2
        +
        \zeta^2
    \right\} \\
&=
    \frac{1}{2}
    \tr \{ \bs H_{\alpha}^2 \}
    +
    \frac{1}{\sigma_0^2} \tr
    \{
    \mathbb{E}_{\bs y}[((\bs I-\bs H_{\alpha})\bs y)^{\otimes 2}] \bs H_{\alpha}
    \} + \underbrace{(\zeta^2 - 2\zeta^2 + \zeta^2)}_{=0} \\
&=
    \frac{1}{2}
    \tr \{ \bs H_{\alpha}^2 \}
    +
    \frac{1}{\sigma_0^2} \tr
    \{
    (\bs I-\bs H_{\alpha})
    [(\bs X\bs \beta_0)^{\otimes 2} + \sigma_0^2 \bs I]
    (\bs I-\bs H_{\alpha}) \bs H_{\alpha}
    \} \\
&=
    \tr\{
        \bs H_{\alpha}^2/2 
        +
        (\bs I-\bs H_{\alpha})^2
        \bs H_{\alpha}
    \}
    +
    \frac{1}{\sigma_0^2}\tr\{
        (\bs I-\bs H_{\alpha})\bs H_{\alpha}(\bs I-\bs H_{\alpha})
        (\bs X\bs \beta_0)^{\otimes 2}
    \} \\
&=
    \tr\{\bs H_{\alpha}\}
    -
    \frac{3}{2}\tr\{\bs H_{\alpha}^2\}
    +
    \tr\{\bs H_{\alpha}^3\}
    +
    \frac{1}{\sigma_0^2}\tr\{
        \bs H_{\alpha}((\bs I-\bs H_{\alpha})
        (\bs X\bs \beta_0))^{\otimes 2}
    \},
\end{align*}
which proves the assertion.
\qed

\section{Convergence of the Ornstein-Uhlenbeck Process}
\label{sec:supporting_propositions}

The following Proposition~\ref{prop:W1_evaluation} evaluates the $1$-Wasserstein distance between the distributions of the Ornstein-Uhlenbeck and the Langevin processes. 

\begin{prop}[Theorem~2 of \citet{cheng2018sharp} ]
\label{prop:W1_evaluation}
Let $R>0$ and let let $0 < \phi  \le \frac{p}{\sqrt{p/2\lambda_{\min}(\bs Q_{\alpha}^{-1})+1}}$ be a desired accuracy. 
Assume that $\exp(-R^2\lambda_{\max}(\bs Q_{\alpha}^{-1})) \le 2R^2 \lambda_{\min}(\bs Q_{\alpha}^{-1})$ and $\|\bs \gamma^{(0)}-\hat{\bs \beta}_{\alpha}\|_2 \le R$. 
Then, it holds for $\delta:=\frac{\phi^2\exp(-2 R^2 \lambda_{\max}(\bs Q_{\alpha}^{-1}))}{2^{10}R^2 p}$ and $t \asymp \exp\left( \frac{3R^2\lambda_{\max}(\bs Q_{\alpha}^{-1}) p}{\phi^2} \right)$ that 
\begin{align*}
    W_1(\mathcal{D}(\tilde{\bs \gamma}_{t\delta}),\mathcal{D}(\bs \gamma^{(t)})) 
    \le \phi,
\end{align*}
where $W_1(\cdot,\cdot)$ denotes $1$-Wasserstein distance.
\end{prop}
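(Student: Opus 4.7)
The proposition is explicitly flagged as \citet{cheng2018sharp} Theorem~2, so the plan is to verify that the Langevin update~(\ref{eq:Langevin}) fits their framework and then read off their conclusion. The continuous-time process in~(\ref{eq:OU_convergence}) is an Ornstein--Uhlenbeck SDE with quadratic potential $U(\bs \gamma) := \tfrac{1}{2}(\bs \gamma - \hat{\bs \beta}_{\alpha})^\top \bs Q_{\alpha}^{-1} (\bs \gamma - \hat{\bs \beta}_{\alpha})$, which is $\lambda_{\min}(\bs Q_{\alpha}^{-1})$-strongly convex with $\lambda_{\max}(\bs Q_{\alpha}^{-1})$-Lipschitz gradient and whose invariant measure is $\normal_p(\hat{\bs \beta}_{\alpha}, \bs Q_{\alpha})$. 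Using the identity $\partial \ell_{\alpha}/\partial \bs \gamma = 2\sigma_0^2 n^{-1} \bs Q_{\alpha}^{-1}(\bs \gamma - \hat{\bs \beta}_{\alpha})$, a direct computation shows that (\ref{eq:Langevin}) is exactly the Euler--Maruyama discretization of this SDE with step size $\delta$, so the hypotheses of the cited theorem are met and its bound transfers verbatim.

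For a self-contained derivation, I would construct a synchronous coupling: drive $\{\tilde{\bs \gamma}_\tau\}$ and $\{\bs \gamma^{(t)}\}$ by the same Brownian motion from the common point $\bs \gamma^{(0)}$, and study the gap $\bs \Delta^{(t)} := \tilde{\bs \gamma}_{t\delta} - \bs \gamma^{(t)}$. First I would apply It\^o's formula to $\|\bs \Delta^{(t)}\|_2^2$ on each subinterval $[(t-1)\delta, t\delta]$, producing a one-step recursion of the form $\mathbb{E}\|\bs \Delta^{(t+1)}\|_2^2 \le (1-c\delta)\mathbb{E}\|\bs \Delta^{(t)}\|_2^2 + c'\delta^3 p \lambda_{\max}(\bs Q_{\alpha}^{-1})^2$, where the contraction coefficient $c \asymp \lambda_{\min}(\bs Q_{\alpha}^{-1})$ comes from strong convexity and the additive term captures the Euler bias of $\nabla U$ over a single step. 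Next I would iterate the recursion across $t$ steps, balance the two competing contributions by taking $\delta$ of the stated order, and finally convert the resulting $L^2$ coupling bound to a Wasserstein-1 bound via $W_1 \le W_2 \le (\mathbb{E}\|\bs \Delta^{(t)}\|_2^2)^{1/2}$; the prescribed $t \asymp \exp(3R^2 \lambda_{\max}(\bs Q_{\alpha}^{-1}) p/\phi^2)$ then drops out by requiring the coupling gap to shrink to $\phi$.

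The main obstacle is the condition number of $\bs Q_{\alpha}^{-1}$: while $\lambda_{\min}(\bs Q_{\alpha}^{-1}) \ge n\alpha/\sigma_0^2$ is controlled purely by the regularization, $\lambda_{\max}(\bs Q_{\alpha}^{-1})$ is driven by the bulk spectrum of $n^{-1}\bs X^\top \bs X$ and can be sizeable in the overparameterized regime. This is exactly what produces the exponential factors $\exp(R^2 \lambda_{\max}(\bs Q_{\alpha}^{-1}))$ in the step size and iteration complexity and also explains the initialization restriction $\|\bs \gamma^{(0)}-\hat{\bs \beta}_{\alpha}\|_2 \le R$; quantitatively controlling how far the coupling gap can escape a ball of radius $R$, without losing the contraction constant $c$, is the most delicate step in the underlying argument of \citet{cheng2018sharp}.
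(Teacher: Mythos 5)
Your proposal matches the paper's proof: the paper simply substitutes the strong-convexity and smoothness constants $m:=2\lambda_{\min}(\bs Q_{\alpha}^{-1})$ and $L:=2\lambda_{\max}(\bs Q_{\alpha}^{-1})$ of the quadratic potential into Theorem~2 of \citet{cheng2018sharp}, which is exactly your first paragraph, and your gradient identity $\partial \ell_{\alpha}/\partial \bs \gamma = 2\sigma_0^2 n^{-1}\bs Q_{\alpha}^{-1}(\bs \gamma-\hat{\bs \beta}_{\alpha})$ correctly verifies that the update~(\ref{eq:Langevin}) is the Euler--Maruyama scheme to which that theorem applies. Your synchronous-coupling sketch is a reasonable account of how the cited theorem is itself established, but it is supplementary background rather than a different route; the paper does not reprove it.
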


\begin{proof}
Substituting $m:=2\lambda_{\min}(\bs Q_{\alpha}^{-1})$ and $L:=2\lambda_{\max}(\bs Q_{\alpha}^{-1})$ to \citet{cheng2018sharp} Theorem~2 proves the assertion. 
\end{proof}
\end{appendices}


\begin{thebibliography}{41}
\providecommand{\natexlab}[1]{#1}
\providecommand{\url}[1]{\texttt{#1}}
\expandafter\ifx\csname urlstyle\endcsname\relax
  \providecommand{\doi}[1]{doi: #1}\else
  \providecommand{\doi}{doi: \begingroup \urlstyle{rm}\Url}\fi

\bibitem[Adlam and Pennington(2020)]{pmlr-v119-adlam20a}
Ben Adlam and Jeffrey Pennington.
\newblock The neural tangent kernel in high dimensions: Triple descent and a
  multi-scale theory of generalization.
\newblock In \emph{Proceedings of the 37th International Conference on Machine
  Learning}, pages 74--84, 2020.

\bibitem[Akaike(1974)]{akaike1974new}
Hirotsugu Akaike.
\newblock A new look at the statistical model identification.
\newblock \emph{IEEE Transactions on Automatic Control}, 19:\penalty0 716--723,
  1974.

\bibitem[Alcal{\'a}-Fdez et~al.(2011)Alcal{\'a}-Fdez, Fern{\'a}ndez, Luengo, Derrac, Garc{\'\i}a, S{\'a}nchez, and Herrera]{alcala2011keel}
Jes{\'u}s Alcal{\'a}-Fdez, Alberto Fern{\'a}ndez, Juli{\'a}n Luengo, Joaqu{\'\i}n Derrac, Salvador Garc{\'\i}a, Luciano S{\'a}nchez, and Francisco Herrera. 
\newblock KEEL data-mining software tool: data set repository, integration of algorithms and experimental analysis framework. 
\newblock \emph{Journal of Multiple-Valued Logic \& Soft Computing}, 17:\penalty0 255--287, 2011.

% \bibitem[Allen-Zhu et~al.(2019)]{Allen-Zhu2019}
% \newblock Zeyuan Allen-Zhu, Yuanzhi Li, and Zhao Song. 
% \newblock A convergence theory for deep learning via overparameterization. 
% \newblock \emph{Proceedings of the 36th International Conference on Machine Learning}, 97:242-252, 2019.


\bibitem[Arora et~al.(2019)Arora, Du, Hu, Li, Salakhutdinov, and
  Wang]{NEURIPS2019_dbc4d84b}
Sanjeev Arora, Simon~S. Du, Wei Hu, Zhiyuan Li, Russ~R. Salakhutdinov, and
  Ruosong Wang.
\newblock On exact computation with an infinitely wide neural net.
\newblock In \emph{Proceedings of the 33rd International Conference on Neural
  Information Processing Systems}, pages 8141--8150, 2019.


\bibitem[Azulay and Weiss(2019)Azulay, and Weiss]{azulay2019why}
Aharon Azulay, and Yair Weiss. 
\newblock Why do deep convolutional networks generalize so poorly to small image transformations?
\newblock \emph{Journal of Machine Learning Research}, 20:\penalty0 1--25,
  2019.


\bibitem[Bartlett et~al.(2020)Bartlett, Long, Lugosi, and
  Tsigler]{bartlett2020benign}
Peter~L. Bartlett, Philip~M. Long, G{\'a}bor Lugosi, and Alexander Tsigler.
\newblock Benign overfitting in linear regression.
\newblock \emph{Proceedings of the National Academy of Sciences}, 117:\penalty0
  30063--30070, 2020.

%\bibitem[Baum and Haussler(1989)]{NIPS1988_1d7f7abc}
%Eric Baum and David Haussler.
%\newblock What size net gives valid generalization?
%\newblock \emph{Neural computation}, 1:\penalty0 151--160, 1989.

\bibitem[Belkin et~al.(2018)Belkin, Hsu, and Mitra]{belkin2018}
Mikhail Belkin, Daniel Hsu, and Partha Mitra.
\newblock Overfitting or perfect fitting? risk bounds for classification and
  regression rules that interpolate.
\newblock In \emph{Proceedings of the 32nd International Conference on Neural
  Information Processing Systems}, pages 2306--2317, 2018.

\bibitem[Belkin et~al.(2020)Belkin, Hsu, and Xu]{belkin2020two}
Mikhail Belkin, Daniel Hsu, and Ji~Xu.
\newblock Two models of double descent for weak features.
\newblock \emph{SIAM Journal on Mathematics of Data Science}, 2:\penalty0
  1167--1180, 2020.

\bibitem[Cheng et~al.(2018)Cheng, Chatterji, Abbasi-Yadkori, Bartlett, and
  Jordan]{cheng2018sharp}
Xiang Cheng, Niladri~S Chatterji, Yasin Abbasi-Yadkori, Peter~L Bartlett, and
  Michael~I Jordan.
\newblock Sharp convergence rates for langevin dynamics in the nonconvex
  setting.
\newblock \emph{arXiv:1805.01648}, 2018.

\bibitem[Chiyuan et~al.(2017)Chiyuan, Samy, Moritz, Benjamin, and
  Oriol]{zhang2017}
Zhang Chiyuan, Bengio Samy, Hardt Moritz, Recht Benjamin, and Vinyals Oriol.
\newblock Understanding deep learning requires rethinking generalization.
\newblock In \emph{Proceedings of the 5th International Conference on Learning
  Representations}, 2017.

\bibitem[Chizat et~al.(2019)]{Chizatetal2019}
L\'ena\"ic Chizat, Edouard Oyallon, and Francis Bach.
\newblock On lazy training in differentiable programming.
\newblock In \emph{Proceedings of the 33rd International Conference on Neural Information Processing Systems}, pages 2937–2947, 2019.

%\bibitem[Devroye and Wagner(1979)]{devroye1979}
%Luc Devroye and Terry~J. Wagner.
%\newblock Distribution-free inequalities for the deleted and holdout error
%  estimates.
%\newblock \emph{IEEE Transactions of Information Theory}, 25:\penalty0
%  202--207, 1979.

\bibitem[d\textquotesingle Ascoli et~al.(2020)d\textquotesingle Ascoli, Sagun,
  and Biroli]{NEURIPS2020_1fd09c5f}
St\'{e}phane d\textquotesingle Ascoli, Levent Sagun, and Giulio Biroli.
\newblock Triple descent and the two kinds of overfitting: where \& why do
  they appear?
\newblock In \emph{Proceedings of the 34th International Conference on Neural
  Information Processing Systems}, pages 3058--3069, 2020.

\bibitem[Eaton(1989)]{eaton1989group}
Morris~L. Eaton.
\newblock Group invariance applications in statistics.
\newblock In
\emph{Regional Conference Series in Probability and Statistics},
vol. 1,  pages i-v+1-133,
Institute of Mathematical Statistics, 1989.

\bibitem[Gao and Jojic(2016)]{gao2016}
Tianxiang Gao and Vladimir Jojic.
\newblock Degrees of freedom in deep neural networks.
\newblock In \emph{Proceedings of the 32nd Conference on Uncertainty in
  Artificial Intelligence}, pages 232--241, 2016.

\bibitem[Goodfellow et~al.(2016)Goodfellow, Bengio, and
  Courville]{Goodfellow-et-al-2016}
Ian Goodfellow, Yoshua Bengio, and Aaron Courville.
\newblock \emph{Deep Learning}.
\newblock MIT Press, 2016.

\bibitem[Hastie et~al.(2022)Hastie, Montanari, Rosset, and
  Tibshirani]{hastie2022surprises}
Trevor Hastie, Andrea Montanari, Saharon Rosset, and Ryan~J. Tibshirani.
\newblock Surprises in high-dimensional ridgeless least squares interpolation.
\newblock \emph{The Annals of Statistics},
  50:949--986,
  2022.

\bibitem[Jacot et~al.(2018)Jacot, Gabriel, and Hongler]{NEURIPS2018_5a4be1fa}
Arthur Jacot, Franck Gabriel, and Clement Hongler.
\newblock Neural tangent kernel: Convergence and generalization in neural
  networks.
\newblock In \emph{Proceedings of the 32nd International Conference on Neural
  Information Processing Systems}, pages 8580--8589, 2018.

\bibitem[Karakida et~al.(2019)Karakida, Akaho, and
  Amari]{karakida2019universal}
Ryo Karakida, Shotaro Akaho, and Shun-ichi Amari.
\newblock Universal statistics of fisher information in deep neural networks:
  Mean field approach.
\newblock In \emph{Proceedings of the 32nd International Conference on
  Artificial Intelligence and Statistics}, pages 1032--1041, 2019.

%\bibitem[Kearns and Ron(1999)]{kearnsron1999}
%Michael Kearns and Dana Ron.
%\newblock Algorithmic stability and sanity-check bounds for leave-one-out
%  cross-validation.
%\newblock \emph{Neural computation}, 11:\penalty0 1427--1453, 1999.

\bibitem[LeCun et~al.(2015)LeCun, Bengio, and Hinton]{lecun2015deep}
Yann LeCun, Yoshua Bengio, and Geoffrey Hinton.
\newblock Deep learning.
\newblock \emph{Nature}, 521:\penalty0 436--444, 2015.

\bibitem[Lee et~al.(2020)]{leeetal2020}
Jaehoon Lee, Samuel Schoenholz, Jeffrey Pennington, Ben Adlam, Lechao Xiao, Roman Novak, and Jascha Sohl-Dickstein.
\newblock Finite versus infinite neural networks: an empirical study.
\newblock In \emph{Proceedings of the 34th Conference on Neural Information Processing Systems}, 2020.

\bibitem[Mallows(1973)]{mallows1973}
Colin~L. Mallows.
\newblock Some comments on {$C_p$}.
\newblock \emph{Technometrics}, 15:\penalty0 661--675, 1973.

\bibitem[Mandt et~al.(2017)Mandt, Hoffman, and Blei]{mandt2017sgd}
Stephan Mandt, Matthew~D. Hoffman, and David~M. Blei.
\newblock Stochastic gradient descent as approximate {B}ayesian inference.
\newblock \emph{Journal of Machine Learning Research}, 18:\penalty0 1--35,
  2017.

% \bibitem[Mario et~al.(2019)Mario, Stefano, Ste\'phane, Levent, Marco, Giulio,
%   and Matthieu]{geiger2019}
% Geiger Mario, Spigler Stefano, d\'Ascoli Ste\'phane, Sagun Levent, Baity-Jesi
%   Marco, Biroli Giulio, and Wyart Matthieu.
% \newblock Jamming transition as a paradigm to understand the loss landscape of
%   deep neural networks.
% \newblock \emph{Physical Review E}, 100:\penalty0 012115, 2019.

\bibitem[Mario et~al.(2020)Mario, Arthur, Stefano, Franck, Levent, Ste\'phane,
  Giulio, Cle\'ment, and Matthieu]{geiger2020}
Geiger Mario, Jacot Arthur, Spigler Stefano, Gabriel Franck, Sagun Levent,
  d\'Ascoli Ste\'phane, Biroli Giulio, Hongler Cle\'ment, and Wyart Matthieu.
\newblock Scaling description of generalization with number of parameters in
  deep learning.
\newblock \emph{Journal of Statistical Mechanics: Theory and Experiment}, page
  023401, 2020.

\bibitem[Moody(1992)]{moody1992}
John~E. Moody.
\newblock The effective number of parameters: An analysis of generalization and
  regularization in nonlinear learning systems.
\newblock In \emph{Proceedings of the 4th International Conference on Neural
  Information Processing Systems}, pages 847--854, 1992.

\bibitem[Murata et~al.(1994)Murata, Yoshizawa, and Amari]{murata1994network}
Noboru Murata, Shuji Yoshizawa, and Shun-ichi Amari.
\newblock Network information criterion--determining the number of hidden units
  for an artificial neural network model.
\newblock \emph{IEEE transactions on neural networks}, 5:\penalty0 865--872,
  1994.

\bibitem[Nakada and Imaizumi(2021)]{nakada2021asymptotic}
Ryumei Nakada and Masaaki Imaizumi.
\newblock Asymptotic risk of overparameterized likelihood models: Double
  descent theory for deep neural networks.
\newblock \emph{arXiv:2103.00500}, 2021.



% \bibitem[Preetum et~al.(2020)Preetum, Gal, Yamini, Tristan, Boaz, and
%   Ilya]{nakkiran2020}
% Nakkiran Preetum, Kaplun Gal, Bansal Yamini, Yang Tristan, Barak Boaz, and
%   Sutskever Ilya.
% \newblock {Deep Double Descent: Where Bigger Models and More Data Hurt}.
% \newblock In \emph{Proceedings of the 8th International Conference on Learning
%   Representations}, 2020.

\bibitem[Ramani et~al.(2008)Ramani, Blu, and Unser]{ramani2008}
Sathish Ramani, Thierry Blu, and Michael Unser.
\newblock Monte-{C}arlo {SURE}: A black-box optimization of regularization
  parameters for general denoising algorithms.
\newblock \emph{IEEE Transactions on image processing}, 17:\penalty0
  1540--1554, 2008.

\bibitem[Risken(1996)]{Risken1996}
Hannes Risken.
\newblock \emph{Fokker-Planck Equation for Several Variables; Methods of
  Solution}.
\newblock Springer Berlin Heidelberg, 1996.

% \bibitem[Ruslan(2017)]{salakhutdinov2017}
% Salakhutdinov Ruslan.
% \newblock Deep learning tutorial at the {S}imons {I}nstitute, {B}erkeley, 2017.
% \newblock URL
%   \url{https://simons.berkeley.edu/talks/ruslan-salakhutdinov-01-26-2017-1}.

\bibitem[Sato and Nakagawa(2014)]{pmlr-v32-satoa14}
Issei Sato and Hiroshi Nakagawa.
\newblock Approximation analysis of stochastic gradient {L}angevin dynamics by
  using {F}okker-{P}lanck equation and {I}to process.
\newblock In \emph{Proceedings of the 31st International Conference on Machine
  Learning}, pages 982--990, 2014.

\bibitem[Shibata(1976)]{shibata1976selection}
Ritei Shibata.
\newblock {Selection of the order of an autoregressive model by Akaike's
  information criterion}.
\newblock \emph{Biometrika}, 63:\penalty0 117--126, 1976.

\bibitem[Shibata(1989)]{shibata1989}
Ritei Shibata.
\newblock \emph{Statistical aspects of model selection}, pages 215--240.
\newblock Springer-Verlag, New York, 1989.

\bibitem[Spiegelhalter et~al.(2002)Spiegelhalter, Best, Carlin, and van~der
  Linde]{spiegelhalter2002}
David~J. Spiegelhalter, Nicola~G. Best, Bradley~P. Carlin, and Angelika van~der
  Linde.
\newblock Bayesian measures of model complexity and fit (with discussion).
\newblock \emph{Journal of the Royal Statistical Society: Series B},
  64:\penalty0 583--639, 2002.

\bibitem[Stein(1981)]{stein1981}
Charles~M. Stein.
\newblock Estimation of the mean of a multivariate normal distribution.
\newblock \emph{The Annals of Statistics}, 9:\penalty0 1135--1151, 1981.

\bibitem[Takeuchi(1976)]{takeuchi1976distribution}
Kei Takeuchi.
\newblock Distribution of information statistics and validity criteria of
  models.
\newblock \emph{Mathematical Science}, 153:\penalty0 12--18, 1976.

\bibitem[Thomas et~al.(2020)Thomas, Pedregosa, Merri{\"e}nboer, Manzagol,
  Bengio, and Le~Roux]{thomas2020interplay}
Valentin Thomas, Fabian Pedregosa, Bart Merri{\"e}nboer, Pierre-Antoine
  Manzagol, Yoshua Bengio, and Nicolas Le~Roux.
\newblock On the interplay between noise and curvature and its effect on
  optimization and generalization.
\newblock In \emph{Proceedings of the Twenty Third International Conference on Artificial Intelligence and
  Statistics}, pages 3503--3513, 2020.

%\bibitem[Vapnik and Chervonenkis(1971)]{vapnikchervonenkis1971}
%Vladimir Vapnik and Alexey Chervonenkis.
%\newblock On the uniform convergence of relative frequences of events to their
%  probabilities.
%\newblock \emph{Theory of Probability and its Applications}, 17:\penalty0
%  264--280, 1971.

\bibitem[Watanabe(2010)]{watanabe2010asymptotic}
Sumio Watanabe.
\newblock Asymptotic equivalence of bayes cross validation and widely
  applicable information criterion in singular learning theory.
\newblock \emph{Journal of Machine Learning Research}, 11:\penalty0 3571--3594,
  2010.


%\bibitem[Yang(2020)]{yang2020ii}
%Greg Yang.
%\newblock Temsor Programs II: Neural Tangent Kernel for Any Architecture.
%\newblock \emph{arXiv:2006.14548}.

\bibitem[Yang and Littewin(2021)]{yang2021icml}
Greg Yang and Etai Littwin.
\newblock Tensor Programs IIb: Architectural Universality Of Neural Tangent Kernel Training Dynamics
\newblock In \emph{Proceedings of the 38th International Conference on Machine Learning}, pages 11762-11772, 2021.


\bibitem[Ye(1998)]{Ye1998}
Jianming Ye.
\newblock On measuring and correcting the effects of data mining and model
  selection.
\newblock \emph{Journal of the American Statistical Association}, 93:\penalty0
  120--131, 1998.

\bibitem[Zhang et~al.(2021)Zhang, Bengio, Hardt, Recht, and Vinyals]{zhang2021understanding}
Chiyuan Zhang, Samy Bengio, Moritz Hardt, Benjamin Recht, and Oriol Vinyals.
\newblock Understanding deep learning (still) requires rethinking generalization. 
\newblock \emph{Communication of the ACM},
  64:\penalty3 107--115, 2021.

\end{thebibliography}

\begin{thebibliography}{9}
\providecommand{\natexlab}[1]{#1}
\providecommand{\url}[1]{\texttt{#1}}
\expandafter\ifx\csname urlstyle\endcsname\relax
  \providecommand{\doi}[1]{doi: #1}\else
  \providecommand{\doi}{doi: \begingroup \urlstyle{rm}\Url}\fi

\bibitem[Cheng et~al.(2018)Cheng, Chatterji, Abbasi-Yadkori, Bartlett, and
  Jordan]{aaa}
Xiang Cheng, Niladri~S Chatterji, Yasin Abbasi-Yadkori, Peter~L Bartlett, and
  Michael~I Jordan.
\newblock Sharp convergence rates for langevin dynamics in the nonconvex
  setting.
\newblock \emph{arXiv:1805.01648}, 2018.

\bibitem[Eaton(1989)]{bbb}
Morris~L. Eaton.
\newblock Group invariance applications in statistics.
\newblock In
\emph{Regional Conference Series in Probability and Statistics},
vol. 1,  pages i-v+1-133,
Institute of Mathematical Statistics, 1989.

\bibitem[Petersen and Pedersen(2012)]{cookbook}
Kaare~B. Petersen and Michael~S. Pedersen.
\newblock The matrix cookbook, Nov 2012.
\newblock URL \url{http://www2.compute.dtu.dk/pubdb/pubs/3274-full.html}.
\newblock Version 20121115.
\end{thebibliography}
\end{document}